\documentclass{fdsOF} 
\usepackage{amsmath}
\usepackage{paralist}
\usepackage[misc]{ifsym}
\usepackage{epsfig} 
\usepackage{epstopdf} 
\usepackage[colorlinks=true]{hyperref}
\hypersetup{urlcolor=blue, citecolor=red}
\allowdisplaybreaks

\usepackage{latexsym}
\usepackage{amssymb}
\usepackage{amsthm}

\textheight=8.2 true in
 \textwidth=5.0 true in
  \topmargin 30pt
   \setcounter{page}{1}


\newtheorem{theorem}{Theorem}[section]
\newtheorem{corollary}[theorem]{Corollary}

\newtheorem{lemma}[theorem]{Lemma}
\newtheorem{proposition}[theorem]{Proposition}

\newtheorem{problem}{Problem}
\newtheorem{definition}[theorem]{Definition}
\newtheorem{assumption}[theorem]{Assumption}
\newtheorem{remark}[theorem]{Remark}
\newtheorem{example}[theorem]{Example}

\renewcommand{\hbar}{\overline{h}}
\newcommand{\gbar}{\overline{g}}
\newcommand{\Mbar}{\overline{M}}
\newcommand{\Omegabar}{\overline{\Omega}}
\newcommand{\Fbar}{\overline{\mathcal{F}}}
\newcommand{\PP}{\mathbb{P}}
\newcommand{\Qhat}{\widehat{Q}}
\newcommand{\hhat}{\widehat{h}}

\title[Factorizable joint shift revisited]
{Factorizable joint shift revisited} 

\author[Dirk Tasche]{}

\subjclass{Primary: 68T09; Secondary: 62G05.}
\keywords{Distribution shift, dataset shift,
factorizable joint shift, generalized label shift, domain adaptation, label distribution estimation, EM algorithm.}



\begin{document}
\maketitle

\centerline{\scshape
Dirk Tasche$^{{\href{mailto:dirk.tasche@nwu.ac.za}{\textrm{\Letter}}}*1}$}

\medskip

{\footnotesize
 \centerline{$^1$Centre for Business Mathematics and Informatics, North-West University, South Africa}
} 

\bigskip

 \centerline{\ }

\begin{abstract}
Factorizable joint shift (FJS) represents a type of distribution shift (or dataset shift)
that comprises both covariate and label shift. Recently, it has been observed
that FJS actually arises from consecutive label and covariate (or vice versa) shifts.
Research into FJS so far has been confined mostly to the case of categorical labels.
We propose a framework for analysing distribution shift in the case of a general label space,
thus covering both classification and regression models. Based on the framework, we generalise
existing results on FJS to general label spaces and present and analyse a related extension to
label distribution estimation of the expectation maximisation (EM) algorithm for class prior probabilities. 
We also take a fresh
look at generalized label shift (GLS) in the case of a general label space.
\end{abstract}

\section{Introduction}

Classification and regression models may work very well on their respective training datasets 
but fail utterly when being deployed on test datasets. Such failure can be caused 
by distribution shift (also known as dataset shift) between the training and test datasets. 
For this reason, distribution shift and domain adaptation (a notion comprising techniques for
tackling distribution shift) has been a major research topic in machine learning for some time.
This paper takes the perspective of Kouw and Loog \cite{Kouw&Loog2019} and studies the case 
where feature observations from
the test dataset are available for analysis but observations of labels are missing.

Under these circumstances, without any assumptions on the nature of the distribution shift 
between the training and test datasets
meaningful prediction of the labels in the test dataset or of their distribution is not feasible.
See Kouw and Loog  \cite{Kouw&Loog2019} for a survey of approaches to domain adaptation and their related assumptions.
Arguably, covariate shift (also known as population drift)  and label shift 
(also known as prior probability shift or target shift) are the most popular specific distribution shift 
assumptions, both for their intuive appeal as
well as their computational manageability (Moreno-Torres et al.~\cite{MorenoTorres2012521}). 
However, exclusive covariate and label shift assumptions
have been criticised for being insufficient for common domain adaptation tasks 
(e.g.~Tachet des Combes et al.~\cite{tachetdescombes2020domain}).

He et al.~\cite{he2022domain} proposed the notion of factorizable joint shift (FJS) 
as an assumption for distribution shift
that covers both covariate and label shift. Recently, Dong et al.~\cite{Dong2025Factorizable} observed that FJS actually 
may be described as combination of label and covariate shifts, thus making the assumption more plausible.
Tasche~\cite{tasche2022factorizable} analysed FJS for categorical label spaces with regard to the potential and
limitations of domain adaptation based on this assumption. This paper updates the analyses of
Tasche~\cite{tasche2022factorizable} for FJS in the case of a general label space. More precisely, the main contributions
of this paper to the subject of distribution shift are the following:
\begin{itemize}
\item Introducing a framework for the description of non-specific distribution shift with general label spaces
(including Euclidean, categorical and mixed Eu\-clidean-categorical label spaces)
in order to facilitate the demarcation and analysis of specific types of distribution shift. 
\item Generalising the characterisation of FJS by Tasche~\cite{tasche2022factorizable} such
that both classification and regression problems are covered (Theorem~\ref{th:charac} below).
\item Presenting a `label distribution estimation' 
generalisation to general label spaces of the expectation maximisation (EM) algorithm  
for estimating target prior class probabilities by Saerens et al.~\cite{saerens2002adjusting} and showing that it
is fit for purpose (Theorem~\ref{th:alternateB} below).
\item Generalising a result by He et al.~\cite{he2022domain} on the relation between generalized label shift (GLS)
and FJS (Proposition~\ref{pr:GLS} below).
\end{itemize}

The setting and notation for this paper are set out in Section~\ref{se:shift}, with
Section~\ref{se:linking} presenting a framework for the description of general joint distribution shift and
some details being covered in Appendix~\ref{se:notation} and Appendix~\ref{se:facilitating}. 
Section~\ref{se:FJS} comprises analyses and results regarding FJS as well as
approaches to estimating the characterics of FJS, including the presentation and analysis of a general EM algorithm in
Section~\ref{se:probB} as well as an illustrative example of how the EM algorithm deals with
real-valued labels and categorical features in Section~\ref{se:EMfinite}. The relation
between GLS and FJS is analysed in Section~\ref{se:GLS}. Section~\ref{se:conclusions} concludes the paper,
with a summary and suggestions for further research. Appendix~\ref{se:DensEst} presents a description
of the classification approach to density estimation while Appendix~\ref{se:proofs} contains the proofs for some of
the results presented before.

\section{General joint distribution shift}
\label{se:shift}

We study distribution shift related to classification and regression problems in a probabilistic setting with feature 
(also known as~input, explanatory or independent) 
variables $X$ and label (also known as~output, response or dependent) 
variables $Y$. We assume that a joint source (also known as~training) 
probability distribution $P = P_{(X,Y)}$ 
of $X$ and $Y$ already has been learnt. Now we are presented with a joint target 
(also known as~test) distribution $Q = Q_{(X,Y)}$ 
of $X$ and $Y$ which is possibly different to $P$ and only partially observable. 
Lack of observations from
the target distribution $Q$ and more generally lack of information about $Q$ may become an issue in the presence
of distribution shift (Zhang et al.~\cite{zhang2023dive}, Section~4.9). The problem then is to infer information  about
the posterior (also known as~conditional) distribution $Q_{Y|X}$ of $Y$ given $X$ under the target distribution 
$Q$ from the
source distribution $P$ despite a lack of joint observations of $(X,Y)$ under $Q$. 
In Section~\ref{se:FJS} below, under an assumption of FJS we analyse the following two cases:
\begin{problem} \label{prB}  The target marginal distribution $Q_X$ of the feature variable is known but neither the marginal distribution
$Q_Y$ of the label variable nor any of the posterior distributions $Q_{X|Y}$ and $Q_{Y|X}$.
\end{problem}
\begin{problem}\label{prA} Both of the target marginal distributions $Q_X$ and $Q_Y$ of the feature and label variables 
respectively are known but
neither of the posterior distributions $Q_{X|Y}$ and $Q_{Y|X}$.
\end{problem}
Problem~\ref{prB} is an unsupervised domain adaptation problem where a classification or 
regression function is learnt on
a completely known dataset in a supervised manner and is then adapted to another dataset without label information
in an unsupervised manner (Kouw and Loog~\cite{Kouw&Loog2019}, Section~6.7). Problem~\ref{prA} where only joint 
observations of features and label are missing 
has less frequently been looked at in the machine learning literature. It is akin to copula estimation
(Durante and Sempi~\cite{durante2016principles})
 and as such has some practical relevance 
(see e.g.~Jaworski et al.~\cite{jaworski2013copulae}). Problem~\ref{prA} is investigated in this paper mainly
because its solutions can inform approaches to solutions for Problem~\ref{prB}.

In preparation for the discussion of FJS in Section~\ref{se:FJS}, in this section, Appendix~\ref{se:notation} and
Appendix~\ref{se:facilitating} we assume that
the source distribution $P$ and the target distribution $Q$ are both completely known.
This assumption is reasonable because
\begin{itemize}
\item it helps identifying characteristics of the distributions which can be estimated from 
observations under appropriate assumptions,
\item it helps identifying characteristics of the distributions for which reasonable assumptions can be made 
(based on exogenous information, on theoretical considerations etc.), and
\item knowledge of the characteristics can inform generative approaches to distribution shift 
when datasets need to be generated by simulation. 
\end{itemize}

\subsection{Setting}
\label{se:setting}

The setting for this paper is a standard setting for domain adaptation and therefore,
with the exception of some details of
the notation, similar to the setting described for example in the first paragraph of Section~2 of 
Azizzadenesheli~\cite{Azizzadenesheli2022}.
\begin{itemize}
\item Each instance (also known as~object) $\omega$ is an element of a set 
$\Omega$ which is called \emph{instance space}. 
Each instance $\omega$ has features which are represented collectively as $x \in \Omega_X$ and a label $y \in \Omega_Y$.
$\Omega_X$ and $\Omega_Y$ are called \emph{feature space} and \emph{label space} respectively.
We are interested in predicting\footnote{%
If $\Omega_Y$ is a finite or countably infinite set 
the prediction task is called \emph{classification}. Otherwise it is
called \emph{regression}.} $y$ from $x$ and assume that any information available for the prediction is captured by
$x$. Therefore, instances $\omega$ may be identified with combinations of $x$ and $y$ 
such that $\omega = (x,y)$. Consequently, without loss of generality it can be assumed that
$\Omega = \Omega_X \times \Omega_Y$.
\item We call the projection $X: \Omega \to \Omega_X, \omega \mapsto x = X(\omega)$  \emph{feature variable}
and the projection $Y: \Omega \to \Omega_Y, \omega \mapsto y = Y(\omega)$ \emph{label variable}.
\item Instances occur with different frequencies. For example, instances represented 
by a combination $\omega_0=(x_0, y_0)$ 
of features and label might be observed more often than instances with a combination $\omega_1=(x_1, y_1)$.
To reflect such different occurence frequencies and to together cover classification and regression as well as
to properly deal with mixed features consisting of both continuous and categorical components, 
the setting for this paper is probabilistic and formulated in terms of measure theory 
(cf.~Chapter~1 of Klenke~\cite{klenke2013probability}).
\item The feature space $\Omega_X$ and the label space $\Omega_Y$ are parts of measurable spaces 
$(\Omega_X, \mathcal{H})$ and $(\Omega_Y, \mathcal{G})$ respectively where the $\sigma$-algebra\footnote{%
Cf.~ Definition~1.2 of Klenke~\cite{klenke2013probability}.
Some authors prefer to talk about $\sigma$-algebras as \emph{information sets}, 
e.g.~Holzmann and Eulert~\cite{HolzmannInformation2014}. See Definition~1.76 of Klenke~\cite{klenke2013probability} for 
the notion of measurability.}  $\mathcal{H}$
is a collection of subsets of $\Omega_X$ and the $\sigma$-algebra $\mathcal{G}$
is a collection of subsets of $\Omega_Y$. The related $\sigma$-algebra $\mathcal{F}$ of subsets of the instance space 
$\Omega$ then is defined as the $\sigma$-algebra generated by the feature variable $X$ and the label variable $Y$, i.e.
\begin{equation}\label{eq:F}
\mathcal{F} \ = \ \sigma(X, Y)\ =\ \sigma\bigl(X^{-1}(\mathcal{H}) \cup Y^{-1}(\mathcal{G})\bigr) \ = \
\mathcal{H} \otimes \mathcal{G},
\end{equation}
with $X^{-1}(\mathcal{H}) = \{X^{-1}(H): H \in \mathcal{H}\}$ and 
$Y^{-1}(\mathcal{G}) = \{Y^{-1}(G): G \in \mathcal{G}\}$.
By definition of $\mathcal{F}$, $X$ is $\mathcal{H}$-$\mathcal{F}$-measurable and 
$Y$ is $\mathcal{G}$-$\mathcal{F}$-measurable.
\item The \emph{source distribution} $P = P_{(X,Y)}$ is a joint distribution of the feature variable $X$ and the
label variable $Y$ and as such defined on $(\Omega, \mathcal{F})$. In the following, the source distribution $P$ is
assumed to be completely known. The \emph{source feature distribution} $P_X$ is the image measure (or 
push-forward measure) of $P$ under $X$
and as such defined by $P_X[H] = P[X \in H]$, $H \in \mathcal{H}$. The \emph{source label distribution} $P_Y$
is the image measure of $P$ under $Y$
and as such defined by $P_Y[G] = P[X \in G]$, $G \in \mathcal{G}$. Both $P_X$ and $P_Y$ are unconditional distributions
of $X$ and $Y$ respectively 
in the sense that they do not take into account already observed realisations of the other variable. 
In the machine learning literature $P_X$ and $P_Y$ also are referred to as \emph{target prior distribution}.
\item The \emph{target distribution} $Q = Q_{(X,Y)}$ is another joint distribution of the feature variable $X$ and the
label variable $Y$ and as such also defined on $(\Omega, \mathcal{F})$. As mentioned above, in contrast to the source 
distribution, the target distribution $Q$ is
assumed to be only partially known to an extent to be specified below in Section~\ref{se:FJS}. 
The \emph{target feature distribution} $Q_X$ is the image measure of $Q$ under $X$
and as such defined by $Q_X[H] = Q[X \in H]$, $H \in \mathcal{H}$. The \emph{target label distribution} $Q_Y$
is the image measure of $Q$ under $Y$
and as such defined by $Q_Y[G] = Q[X \in G]$, $G \in \mathcal{G}$. $Q_X$ and $Q_Y$ are called source
prior distribution.
\item The source distribution $P$ and the target distribution $Q$ are different, i.e.\ $P \neq Q$.
In other words, there is a \emph{distribution shift} (see e.g.~Zhang et al.~\cite{zhang2023dive} 
between source and target.
Occasionally, non-specific distribution shift also is  called \emph{dataset shift} in the literature 
(Qui{\~n}onero-Candela et~al.~\cite{quinonero2009dataset}, Moreno-Torres et al.~\cite{MorenoTorres2012521}).
\end{itemize}

In the following, some notation and technical terms (including for conditional and unconditional expected values) 
are used which are broadly the same as the notation  and terms
used in textbooks on probability theory like Bauer~\cite{Bauer1981ProbTheory} or
Klenke~\cite{klenke2013probability}. See Appendix~\ref{se:notation} for a selection of terms and concepts that
are important for this paper.

\subsection{Linking source and target distributions in the face of distribution shift}
\label{se:linking}

Zhang et al.~\cite{Zhang:2013:TargetShift} commented on the problem of how
to deal with distribution shift: ``If the data distribution changes arbitrarily, training
data would be of no use to make predictions on the test
domain. To perform domain adaptation successfully,
relevant knowledge in the training (or source) domain
should be transferred to the test (or target) domain.'' 

In this paper, knowledge transfer from source distribution $P$ to target distribution $Q$ is 
expressed through a density, as stated in the following main assumption of the paper.
\begin{subequations}
\begin{assumption}[Absolute continuity of target distribution]\label{as:main}
In the setting of Section~\ref{se:setting},  $Q$ is absolutely continuous with respect to $P$, i.e.
\begin{equation}\label{eq:abscont}
P[F] \ = \ 0 \ \text{for}\ F\in \mathcal{F} \quad \Rightarrow \quad Q[F] \ = \ 0.
\end{equation}
$f = f(X,Y) = \frac{d Q}{d P}$ denotes the Radon-Nikodym derivative\footnote{%
The existence of $f$ follows from the Radon-Nikodym theorem (Corollary~7.34 of Klenke~\cite{klenke2013probability}).}
of $Q$ with respect to $P$, i.e.\ $f$ is a density of $Q$ with respect to $P$ in the sense that
$f$ is $\mathcal{F}$-Borel measurable and non-negative and it holds that
\begin{equation}\label{eq:transform}
Q[F] \ =\ E_P[f\,\mathbf{1}_F]\quad \text{for all}\ F \in \mathcal{F}.
\end{equation}
\end{assumption}
\end{subequations}

In the literature the `transfer' of information from $P$ to $Q$ by means of \eqref{eq:transform} is sometimes
called `importance reweighting' and the density $f$ is referred to
as `importance weights' (e.g.~Zhang et al.~\cite{Zhang:2013:TargetShift}, Kouw and Loog~\cite{Kouw&Loog2019}). 
More often than not, in the literature the reference measure for the density $f$ remains
undisclosed or appears to be the Lebesgue measure (e.g.~Zhang et al.~\cite{Zhang:2013:TargetShift}, 
Lipton et al.~\cite{pmlr-v80-lipton18a}).
Reid and Williamson~\cite{reid2011information}  and  Azizzadenesheli~\cite{Azizzadenesheli2022} are notable exceptions.

Observe that condition \eqref{eq:abscont} is equivalent to stating that
$Q[F] > 0$ for $F\in \mathcal{F}$ implies $P[F] > 0$.
Hence condition \eqref{eq:abscont} may be interpreted as 
`the source distribution $P$ is richer with information than the target distribution $Q$'.
Actually, this is a rather strong assumption such that there is a host of literature on approaches
to domain adaptation without Assumption~\ref{as:main}, including the seminal paper by 
Ben-David et al.~\cite{Ben-David2007Representations}. See also Johansson et al.~\cite{pmlr-v89-johansson19a} 
for analyses of the consequences of renouncing Assumption~\ref{as:main}.

For the rest of the paper, we suppose that Assumption~\ref{as:main} is true. If there are independent
joint feature-label observations from both the source and the target distributions, the density $f$
from the assumption can be estimated. Besides giving a short overview of the
available estimation methods, Filipovi{\'c} and Schneider~\cite{filipovic2025kernel} present methods based on
reproducing kernel Hilbert spaces (RKHS) which are applicable under mild conditions. An approach
based on probabilistic binary classification was proposed by Qin~\cite{Qin1998DensClass}. This approach is
attractive because it allows for measuring the similarity of the two distributions in question
with tools like the area under the curve (AUC) or the test statistic of the Kolmogorov-Smirnov test. 
For the readers' convenience, a description of the classification approach is given in 
Appendix~\ref{se:DensEst} below.

Proposition~\ref{pr:genProperties} records some immediate consequences of Assumption~\ref{as:main}.

\begin{proposition}\label{pr:genProperties}
Let $T = T(X,Y): \Omega \to \mathbb{R}$ be $\mathcal{F}$-Borel-measurable with $E_P[|T|] < \infty$ or
$T \ge 0$. Then Assumption~\ref{as:main} implies the following statements:
\begin{itemize}
\item[(i)] $Q_X$ has a density $h$ with respect to $P_X$ which can be represented as
\begin{align*}
h(x) & \ = \ E_P[f \,|\,X=x], \quad P_X\text{-a.s.\ for}\ x \in \Omega_X,\\
\intertext{and it holds that}
E_Q[T\,|\,X=x] & \ = \ \frac{E_P[T\,f\,|\,X=x]}{h(x)}, 
\end{align*}
for $x\in \Omega_X$ with $h(x) > 0$, i.e.\ with probability~$1$ under $Q_X$.
\item[(ii)]  $Q_Y$ has a density $g$ with respect to $P_Y$ which can be represented as
\begin{align*}
g(y) & \ = \ E_P[f \,|\,Y=y], \quad P_Y\text{-a.s.\ for}\ y \in \Omega_Y,\\
\intertext{and it holds that}
E_Q[T\,|\,Y=y] & \ = \ \frac{E_P[T\,f\,|\,Y=y]}{g(y)},
\end{align*}
for $y\in \Omega_Y$ with $g(y) > 0$, i.e.\ with probability~$1$ under $Q_Y$.
\end{itemize} 
\end{proposition}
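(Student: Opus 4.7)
The plan is to unwind the definitions of image measure, Radon--Nikodym density, and conditional expectation given a random variable, in that order. Since (ii) is just (i) with the roles of $X$ and $Y$ swapped, I would prove (i) and indicate that (ii) follows \emph{mutatis mutandis}.

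First, I would verify that $h(x) = E_P[f\,|\,X = x]$ is a density of $Q_X$ with respect to $P_X$. Fix $H \in \mathcal{H}$. The chain
\begin{equation*}
Q_X[H] \ =\ Q[X \in H] \ =\ E_P\bigl[f\,\mathbf{1}_{\{X\in H\}}\bigr] \ =\ \int_H E_P[f\,|\,X=x]\,P_X(dx) \ =\ \int_H h(x)\,P_X(dx)
\end{equation*}
uses, successively, the definition of $Q_X$ as image measure, the transform identity \eqref{eq:transform}, and the defining property \eqref{eq:condV} of the expectation conditional on $X$. This shows that $h$ is a version of $d Q_X/d P_X$.

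Next, for the formula for $E_Q[T\,|\,X = x]$ I would apply the same three-step identity to $T f$ under $P$ and to $T$ under $Q$. For every $H \in \mathcal{H}$,
\begin{equation*}
\int_H E_Q[T\,|\,X=x]\, Q_X(dx) \ =\ E_Q\bigl[T\,\mathbf{1}_{\{X\in H\}}\bigr] \ =\ E_P\bigl[T f\,\mathbf{1}_{\{X\in H\}}\bigr] \ =\ \int_H E_P[T f\,|\,X=x]\, P_X(dx),
\end{equation*}
where the middle equality is \eqref{eq:transform} (applied to $T\,\mathbf{1}_{\{X\in H\}}$) and the outer equalities are \eqref{eq:condV}. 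Rewriting the left-hand side via $d Q_X = h\, d P_X$ gives
\begin{equation*}
\int_H E_Q[T\,|\,X=x]\, h(x)\, P_X(dx) \ =\ \int_H E_P[T f\,|\,X=x]\, P_X(dx)
\end{equation*}
for every $H \in \mathcal{H}$, which forces $E_Q[T\,|\,X=x]\, h(x) = E_P[T f\,|\,X=x]$ $P_X$-a.s.\ on $\Omega_X$. Dividing by $h(x)$ on the set $\{h > 0\}$ then yields the claimed identity.

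The one subtlety I would flag is that the division is legitimate precisely on a set of full $Q_X$-measure: because $h$ is a $P_X$-density of $Q_X$, one has $Q_X[\{h=0\}] = \int_{\{h=0\}} h\,dP_X = 0$, so $\{h > 0\}$ holds with probability $1$ under $Q_X$, as asserted. There is no genuine obstacle beyond bookkeeping: one merely has to keep track of which reference measure each almost-sure statement is attached to ($P_X$ for statements about $h$ as a density, $Q_X$ for the conditional expectation identity). For $T$ non-negative the calculation is immediate; the integrable case follows by decomposition into positive and negative parts.
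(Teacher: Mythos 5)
Your proposal is correct and follows essentially the same route as the paper: the density identity is exactly the paper's observation that $\frac{d Q|\sigma(X)}{d P|\sigma(X)} = E_P[f\,|\,\sigma(X)]$ read through \eqref{eq:identity}, and your second computation is precisely the generalized Bayes formula that the paper invokes by citation (Lemma~A1 of the cited earlier work), which you simply re-derive from \eqref{eq:transform} and \eqref{eq:condV}. The bookkeeping about dividing only on $\{h>0\}$, a set of full $Q_X$-measure, matches the paper's statement, so no changes are needed.
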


\begin{proof}[Proof of Proposition~\ref{pr:genProperties}.] 
The statements about the densities follow from \eqref{eq:identity} and Lemma~\ref{le:density} in 
Appendix~\ref{se:notation} below.\\
The statements about the conditional expected values follow from the generalized Bayes formula, 
as stated in Lemma~A1 of Tasche~\cite{tasche2022factorizable}, the formulae for the densities and \eqref{eq:identity}.
\end{proof}

Since application of Fubini's theorem \eqref{eq:Fubini} cannot be justified 
in the absence of any assumptions on the properties of the source distribution $P$ in the setting as
specified in Section~\ref{se:setting}, it is not possible to draw further general conclusions about 
the composition of the density $f$ or the relationship between the posterior distributions of
$X$ given $Y$ under the target distribution $Q$ and the source distribution $P$ respectively. This is in contrast to
the situation studied by Tasche~\cite{tasche2022factorizable}. In Tasche~\cite{tasche2022factorizable}, thanks to the assumption that
the label variable is categorical, more refined formulae for $f$ (Theorem~1) and the posterior probabilities 
(Corollary~2, a more general version of Eq.~(2.4) of Saerens et al.~\cite{saerens2002adjusting}, but of similar shape) 
were derived.

Appendix~\ref{se:source} and Appendix~\ref{se:further} present additional assumptions on the properties of the 
source distribution $P$. 
Based on those assumptions,  some follow-ups of the statements in Proposition~\ref{pr:genProperties} are presented 
in Appendix~\ref{se:facilitating}
in order to facilitate numerical evaluations of the statistics
related to the target distribution $Q$ among other things.

\section{Factorizable joint shift (FJS) for general label spaces}
\label{se:FJS}

He et al.~\cite{he2022domain} proposed FJS  ``to handle the co-existence of sampling bias in covariates and
labels'', and consequently as a notion generalising both covariate and label shift.
Tasche~\cite{tasche2022factorizable} (Remark~1) characterised FJS in the case of categorical label variables 
as invariance of ratios of the feature 
densities conditional on different labels up to constant factors between the source and the target distribution.
In addition, he noted that in domain adaptation when only the feature marginal distribution of the target 
distribution is known (i.e.\ Problem~\ref{prB} of Section~\ref{se:shift}), 
assuming FJS without constraints between the source and the target distributions does not uniquely
determine the distribution shift. More recently, Dong et al.~\cite{Dong2025Factorizable} 
observed that FJS may be described
as combination of a label and a covariate shift. This section generalises these observations for the case of
FJS in the face of a general label space. In addition, solutions to Problems \ref{prA} and \ref{prB} of Section~\ref{se:shift}
under the assumption of FJS are discussed.

\subsection{Characterising FJS}
\label{se:characFJS}

\begin{definition}\label{de:FJS} In the setting of Section~\ref{se:setting},
$P$ and $Q$ are related through \emph{factorizable joint shift} (FJS) 
if there are an $\mathcal{H}$-measurable function $\hbar: \Omega_{X} \to [0,\infty)$ and a $\mathcal{G}$-measurable 
function $\gbar: \Omega_{Y} \to [0,\infty)$ such
that $\hbar(X)\,\gbar(Y)$ is a density of $Q$ with respect to $P$.
\end{definition}

If $P$ and $Q$ are related through FJS then in particular Asssumption~\ref{as:main} is satisfied.
Note that $\hbar$ and $\gbar$ in Definition~\ref{de:FJS} are unique only up to a constant factor $c > 0$, in 
the sense that 
\begin{equation}\label{eq:hbarc}
\frac{d Q}{d P} \quad = \quad \hbar_c(X)\,\gbar_c(Y),
\end{equation}
for $\hbar_c = c\,\hbar$ and $\gbar_c = \gbar / c$. As mentioned above, FJS encompasses both covariate shift
and label shift as special cases.

\begin{definition}\label{de:covariatelabelShift}
In the setting of Section~\ref{se:setting}, the notions of covariate and label shift respectively are
defined as follows:
\begin{itemize}
\item[(i)] $Q$ and $P$ are related through \emph{covariate shift} 
if for each $G \in \mathcal{G}$ 
there is an $\mathcal{H}$-Borel measurable function $\eta_G: \Omega_X \to [0,1]$ such that 
$Q[Y \in G\,|\,X=x] = \eta_G(x)$ $Q_X$-a.s.\ and $P[Y \in G\,|\,X=x] = \eta_G(x)$ $P_X$-a.s.\ hold.
\item[(ii)] $Q$ and $P$ are related through \emph{label shift}  if for each $H \in \mathcal{H}$ 
there is a $\mathcal{G}$-Borel measurable function $\gamma_H: \Omega_y\to [0,1]$ such that 
$Q[X \in H\,|\,Y=y] = \gamma_H(y)$ $Q_Y$-a.s.\ and $P[X \in H\,|\,Y=y] = \gamma_H(y)$ $P_Y$-a.s.\ hold.
\end{itemize}
\end{definition}

\begin{proposition}\label{pr:SpecCases}
In the setting of Section~\ref{se:setting} and under Assumption~\ref{as:main},  both
covariate shift and label shift in the sense of Definition~\ref{de:covariatelabelShift} imply 
FJS in the sense of Definition~\ref{de:FJS}  between the source distribution $P$ and the target distribution $Q$.\\
Conversely, if $P$ and $Q$ are related through FJS with $\gbar =1$ then $P$ and $Q$ are related also through
covariate shift. If $P$ and $Q$ are related through FJS with $\hbar =1$ then $P$ and $Q$ are related also through
label shift.
\end{proposition}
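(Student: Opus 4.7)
The plan is to route both directions through Proposition~\ref{pr:genProperties}—which rests only on the already-standing Assumption~\ref{as:main}—and to check the defining rectangle identities on a $\pi$-system generating $\mathcal{F} = \mathcal{H} \otimes \mathcal{G}$. For the implication covariate shift $\Rightarrow$ FJS, I would set $\gbar \equiv 1$ and $\hbar = h$, where $h$ is the density of $Q_X$ with respect to $P_X$ produced by Proposition~\ref{pr:genProperties}~(i). To verify that $h(X)$ is a density of $Q$ with respect to $P$ I take an arbitrary rectangle $H \times G$, condition on $X$ and use \eqref{eq:condV} together with the pull-out property:
\begin{align*}
E_P\bigl[h(X)\mathbf{1}_H(X)\mathbf{1}_G(Y)\bigr] &\ = \ \int_H h(x)\,P[Y\in G\,|\,X=x]\,P_X(dx)\\
&\ = \ \int_H P[Y\in G\,|\,X=x]\,Q_X(dx)\\
&\ = \ \int_H Q[Y\in G\,|\,X=x]\,Q_X(dx) \ = \ Q[X\in H,\,Y\in G],
\end{align*}
where the penultimate step applies covariate shift. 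Since $\{H \times G : H\in\mathcal{H},\,G\in\mathcal{G}\}$ is a $\pi$-system generating $\mathcal{F}$, Dynkin's theorem extends the identity to all of $\mathcal{F}$. The label-shift implication follows by swapping the roles of $X$ and $Y$, taking $\hbar\equiv 1$ and $\gbar = g$, the density of $Q_Y$ with respect to $P_Y$ furnished by Proposition~\ref{pr:genProperties}~(ii).

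For the converse, assume $dQ/dP = \hbar(X)$. Because $\hbar(X)$ is already $\sigma(X)$-measurable, Proposition~\ref{pr:genProperties}~(i) gives $h(x) = E_P[\hbar(X)\,|\,X=x] = \hbar(x)$ $P_X$-a.s. The Bayes-type formula in the same proposition then yields, for every $G \in \mathcal{G}$ and every $x$ with $h(x) > 0$ (hence $Q_X$-a.s.),
\begin{equation*}
Q[Y \in G\,|\,X=x]\ =\ \frac{E_P[\hbar(X)\mathbf{1}_G(Y)\,|\,X=x]}{h(x)}\ =\ \frac{\hbar(x)\,P[Y \in G\,|\,X=x]}{\hbar(x)}\ =\ P[Y \in G\,|\,X=x],
\end{equation*}
so $\eta_G(x) := P[Y \in G\,|\,X=x]$ serves as a single $\mathcal{H}$-measurable witness for Definition~\ref{de:covariatelabelShift}~(i). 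The corresponding $\hbar \equiv 1 \Rightarrow$ label-shift claim is symmetric via Proposition~\ref{pr:genProperties}~(ii).

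The main technical subtlety is the simultaneous $P_X$- and $Q_X$-a.s.\ (respectively $P_Y$- and $Q_Y$-a.s.) agreement that Definition~\ref{de:covariatelabelShift} demands of one common function $\eta_G$. This is however automatic: Assumption~\ref{as:main} implies $Q_X \ll P_X$ via Proposition~\ref{pr:genProperties}~(i), so a $P_X$-null exceptional set is automatically $Q_X$-null, and the nontrivial $Q_X$-a.s.\ equality is the one supplied by the Bayes-formula calculation above. A pleasant by-product is that neither Assumption~\ref{as:condDist} nor Assumption~\ref{as:aux} is needed: both directions rely solely on Proposition~\ref{pr:genProperties}, the pull-out property for conditional expectations, and \eqref{eq:condV}.
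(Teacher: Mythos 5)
Your argument is correct, and its two halves relate to the paper's proof in different ways. For the forward direction (covariate/label shift $\Rightarrow$ FJS) you are doing essentially what the paper does: the paper starts from $Q[X\in H, Y\in G]=E_Q[\mathbf{1}_{\{X\in H\}}\eta_G(X)]=E_P[f\,\mathbf{1}_{\{X\in H\}}\eta_G(X)]$, pulls the tower/pull-out property through $\sigma(X)$ to land on $E_P[h(X)\mathbf{1}_{\{X\in H,Y\in G\}}]$, and invokes the measure uniqueness theorem on the $\cap$-stable generator of rectangles; you run the same computation in the opposite direction and invoke Dynkin, which is the same device, and your explicit remark that $Q_X\ll P_X$ (from Proposition~\ref{pr:genProperties}) lets one use a single $P_X$-a.s.\ version of $\eta_G$ under $Q_X$ is exactly the point the paper leaves implicit. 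Where you genuinely diverge is the converse: the paper disposes of it by citing Lemma~4.2 of \citet{tasche2023sparse}, whereas you give a short self-contained derivation via the generalized Bayes formula of Proposition~\ref{pr:genProperties}~(i): with $\gbar=1$ one has $h=\hbar$ $P_X$-a.s., so $Q[Y\in G\,|\,X=x]=\hbar(x)P[Y\in G\,|\,X=x]/\hbar(x)=P[Y\in G\,|\,X=x]$ on $\{h>0\}$, i.e.\ $Q_X$-a.s., and a fixed $[0,1]$-valued version of $P[Y\in G\,|\,X=\cdot]$ serves as the common witness $\eta_G$; the label-shift case is symmetric. This buys the reader a proof that stays entirely inside the present paper's toolkit (Assumption~\ref{as:main}, Proposition~\ref{pr:genProperties}, conditional-expectation calculus), at the cost of a few extra lines compared with the paper's citation; neither Assumption~\ref{as:condDist} nor Assumption~\ref{as:aux} is needed in either treatment, as you note.
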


See Appendix~\ref{se:proofs} for a proof of Proposition~\ref{pr:SpecCases}. 
By the density chain rule, a sequence of FJS results in another FJS. In particular, by
Proposition~\ref{pr:SpecCases} any sequence of covariate and label shifts gives rise to an FJS.
Conversely, any FJS can be interpreted as consecutive label and covariate shifts or consecutive covariate and 
label shifts. The following proposition provides a streamlined version of Theorem~1 of 
Dong et al.~\cite{Dong2025Factorizable} in the more general setting of Section~\ref{se:setting}.
\begin{proposition}\label{pr:combined}
Let $P$ and $Q$ be related through FJS in the sense of Definition~\ref{de:FJS}. Define probability
measures $Q_L$ and $Q_C$ on $(\Omega, \mathcal{F})$ by
\begin{subequations}
\begin{align}
\frac{d Q_L}{d P} & = \frac{\gbar(Y)}{E_P\bigl[\gbar(Y)\bigr]},\\
\frac{d Q_C}{d Q_L} & = \frac{\hbar(X)}{E_{Q_L}\bigl[\hbar(X)\bigr]}.
\end{align}
Then the following statements hold true:
\begin{itemize}
\item[(i)] $Q_L$ and $Q_C$ are well-defined since it holds that $E_P\bigl[\gbar(Y)\bigr] >0$ and\\
$E_{Q_L}\bigl[\hbar(X)\bigr] > 0$.
\item[(ii)] $Q = Q_C$.
\item[(iii)] For all $F \in \mathcal{F}$ it holds that
$Q_L[F \,|\,\sigma(Y)]  =  P[F \,|\,\sigma(Y)]$ and $Q_C[F \,|\,\sigma(X)] \ = \ Q[F \,|\,\sigma(X)]$.
In other words, $P$ and $Q_L$ are related through label shift while $Q_C$ and $Q$ are related through covariate 
shift.
\end{itemize}
\end{subequations}
\end{proposition}

See Appendix~\ref{se:proofs} for a proof of Proposition~\ref{pr:combined}.
Against the background of the description of FJS as a sequence of covariate and label shifts, 
the question arises how much FJS is constrained when compared to non-specific joint distribution 
shift between $Q$ and $P$.
The following result helps to answer this question by presenting the relations between 
the functions $\hbar$ and $\gbar$ which characterise FJS on the one hand and the feature density $h$
and the label density $g$ on the other hand.

\begin{theorem}\label{th:charac}
Assume the setting of Section~\ref{se:setting}.
\begin{enumerate}
\item[(i)] Let $P$ and $Q$ be related through FJS in the 
sense of Definition~\ref{de:FJS} such that there are an
$\mathcal{H}$-measurable function $\hbar: \Omega_{X} \to [0,\infty)$ and a $\mathcal{G}$-measurable 
function $\gbar: \Omega_{Y} \to [0,\infty)$ with the property 
that $\hbar(X)\,\gbar(Y)$ is a density of $Q$ with respect to $P$.
Denote by $h$ a density of $Q_X$ with respect to $P_X$ on $\mathcal{H}$ and by $g$
a density of $Q_Y$ with respect to $P_Y$ on $\mathcal{G}$.
Then it follows that $P$-a.s.
\begin{equation}\label{eq:conditions}
h(X) =  \hbar(X)\,E_P[\gbar(Y)\,|\,\sigma(X)]\qquad \text{and}\qquad g(Y) = \gbar(Y)\,E_P[\hbar(X)\,|\,\sigma(Y)].
\end{equation}
\item[(ii)] Suppose that $h$ is a density of some
probability measure $Q_X^\ast$ on $(\Omega_X, \mathcal{H})$ with respect to $P_X$ and that $g$ 
is a density of some
probability measure $Q_Y^\ast$ on $(\Omega_Y, \mathcal{G})$ with respect to $P_Y$.
In addition, let $\hbar: \Omega_{X} \to [0,\infty)$ be an $\mathcal{H}$-measurable function and 
$\gbar: \Omega_{Y} \to [0,\infty)$ be a $\mathcal{G}$-measurable 
function such that \eqref{eq:conditions} holds true. 
Then $\hbar(X)\,\gbar(Y)$ is a probability density on $(\Omega, \mathcal{F})$, and for $Q$ defined by
$\frac{d Q}{d P} = \hbar(X)\,\gbar(Y)$ it holds that $Q_X = Q_X^\ast$ and $Q_Y = Q_Y^\ast$.
\end{enumerate}
\end{theorem}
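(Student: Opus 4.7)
For part (i), the plan is to apply Proposition~\ref{pr:genProperties}(i) directly with the factorized density $f = \hbar(X)\,\gbar(Y)$. The proposition yields $h(x) = E_P[\hbar(X)\,\gbar(Y)\,|\,X=x]$ $P_X$-a.s., and the identity \eqref{eq:identity} translates this into $h(X) = E_P[\hbar(X)\,\gbar(Y)\,|\,\sigma(X)]$ $P$-a.s. Pulling the $\sigma(X)$-measurable and non-negative factor $\hbar(X)$ out of the conditional expectation then produces the first identity in \eqref{eq:conditions}. The analogous use of Proposition~\ref{pr:genProperties}(ii) delivers the second identity by the symmetric argument.

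For part (ii), two things need to be checked: that $\hbar(X)\,\gbar(Y)$ is a probability density on $(\Omega,\mathcal{F})$ with respect to $P$, and that the measure $Q$ it defines realises the prescribed marginals. Non-negativity is immediate from $\hbar,\gbar \ge 0$. To verify that the total mass is one, I would apply the tower property and then substitute using the hypothesis \eqref{eq:conditions}:
\begin{equation*}
E_P[\hbar(X)\,\gbar(Y)] \ =\ E_P\bigl[\hbar(X)\,E_P[\gbar(Y)\,|\,\sigma(X)]\bigr] \ =\ E_P[h(X)] \ =\ \int h\,dP_X \ =\ Q_X^\ast[\Omega_X] \ =\ 1,
\end{equation*}
where the penultimate equality uses that $h$ is a density of $Q_X^\ast$ with respect to $P_X$.

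Once $Q$ is known to be a well-defined probability measure, the identity $Q_X = Q_X^\ast$ is obtained by the same manipulation at the level of sets: for $H \in \mathcal{H}$, conditioning on $\sigma(X)$ and using \eqref{eq:conditions} turns $Q_X[H] = E_P[\mathbf{1}_{\{X\in H\}}\,\hbar(X)\,\gbar(Y)]$ into $\int_H h\,dP_X = Q_X^\ast[H]$, and the symmetric argument on $\sigma(Y)$ via the second identity of \eqref{eq:conditions} gives $Q_Y = Q_Y^\ast$. I do not foresee a genuine obstacle; the only point that warrants care is that \eqref{eq:conditions} is an almost-sure equality, so the pull-out and substitution steps are valid only up to $P$-null sets, which is adequate for identifying densities.
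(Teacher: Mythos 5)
Your proposal is correct and follows essentially the same route as the paper: part (i) rests on the fact that the marginal density is the conditional expectation of the joint density given $\sigma(X)$ (resp.\ $\sigma(Y)$) together with pulling out the measurable factor, and part (ii) uses the tower property plus \eqref{eq:conditions} to get total mass one and to identify the marginals. The only cosmetic difference is that you route part (i) through Proposition~\ref{pr:genProperties} and \eqref{eq:identity}, while the paper states directly that $h(X)$ is a density of $Q|\sigma(X)$ with respect to $P|\sigma(X)$; the underlying argument is identical.
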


\emph{Proof of Theorem~\ref{th:charac}.}
\begin{itemize}
\item[(i)]
Assume that $\hbar(X)\,\gbar(Y)$ is a density of $Q$ with respect to $P$. Since $h(X)$ is a density of
$Q|\sigma(X)$ with respect to $P|\sigma(X)$ Lemma~\ref{le:density} implies that
\begin{equation*}
h(X) \ =\  E_P\bigl[\hbar(X)\,\gbar(Y)\,|\,\sigma(X)\bigr] \ = \ 
	\hbar(X)\,E_P\bigl[\gbar(Y)\,|\,\sigma(X)\bigr]. 
\end{equation*}
The second equation in \eqref{eq:conditions} follows similarly because $g(Y)$ is 
a density of $Q|\sigma(Y)$ with respect to $P|\sigma(Y)$.
\item[(ii)] Suppose that \eqref{eq:conditions} is true for densities $h$ and $g$ and functions $\hbar$ and $\gbar$ 
as described in Theorem~\ref{th:charac}~(ii). Then it holds that
\begin{equation*}
E_P[\hbar(X)\,\gbar(Y)] = E_P\bigl[\hbar(X)\,E_P[\gbar(Y)\,|\,\sigma(X)]\bigr]
	= E_P[h(X)] = 1.
\end{equation*}
Hence $\frac{d Q}{d P} = \hbar(X)\,\gbar(Y)$ defines a probability measure $Q$ on $(\Omega, \mathcal{F})$.
\eqref{eq:conditions} implies that $h(X)$ is the marginal density of $Q$ with respect to $P$ on $\sigma(X)$ and
$g(Y)$ is the marginal density of $Q$ with respect to $P$ on $\sigma(Y)$. From this observation it follows
that $Q_X = Q^\ast_X$ on $(\Omega_X, \mathcal{H})$ and $Q_Y = Q^\ast_Y$ on $(\Omega_Y, \mathcal{G})$. 
\hfill \qed
\end{itemize}

\begin{remark}\label{rm:generalisation}\emph{%
Under Assumption~\ref{as:discrete} `Categorical label space' from Appendix~\ref{se:further} below, 
Theorem~\ref{th:charac} can be reconciled 
with Theorem~2 of Tasche~\cite{tasche2022factorizable} as follows:
\begin{itemize}
\item Transformed into the notation of Theorem~\ref{th:charac}, (9a) of Tasche~\cite{tasche2022factorizable} reads 
$b = \gbar(Y)$, 
with $\frac{Q[Y=y]}{P[Y=y]} = g(y)$ and $\varrho(y) = \frac{\gbar(y)\,g(d)}{\gbar(d)\,g(y)}$
for $y = 1, \ldots, d$.
\item In the notation of Theorem~\ref{th:charac}, when taking into account that $g$ in  
Tasche~\cite{tasche2022factorizable} stands for $\hbar(X)$, $h$ for $h(X)$ and
$E_P\bigl[\gbar(Y)\,|\,X=x\bigr]  = \sum_{i=1}^d \gbar(i)\, P[Y=i\,|\,X=x]$, 
(9b) of Tasche~\cite{tasche2022factorizable} turns out to be a part of \eqref{eq:conditions}.
\item (9c) of Tasche~\cite{tasche2022factorizable} is mutatis mutandis nothing else but another part of
\eqref{eq:conditions}.\qed
\end{itemize} }
\end{remark}

\begin{remark}\label{rm:howConstrained}\emph{%
Suppose that $Q$ and $P$ are related through FJS and that Assumption~\ref{as:condDist}(i) is satisfied. 
Observe that then by \eqref{eq:conditions} it holds that $P_X$-a.s.
\begin{equation*}
\{h > 0\}\ = \ \{\hbar > 0\} \cap \bigl\{E_P[\gbar(Y)\,|\,X = \cdot] > 0 \bigr\}.
\end{equation*}
Therefore, it follows for $x\in\Omega_X$ with $h(x) > 0$  that $\frac{\hbar(x)}{h(x)} = 
\frac{1}{E_P[\gbar(Y)\,|\,X=x]}$. Accordingly, under FJS the conditional density $q_{Y|X}$ as 
defined in Theorem~\ref{th:condDens}~(i) can be
represented for all $x$ with $h(x) > 0$ and hence $Q_X$-a.s.\ as
\begin{subequations}
\begin{equation}\label{eq:condDensFJS}
q_{Y|X=x}(y) \ = \ \frac{\gbar(y)}{\int \gbar(z)\,P_{Y|X=x}(dz)}, \quad y \in \Omega_Y.
\end{equation}
Similarly, under Assumption~\ref{as:condDist}(ii) and an assumption of 
FJS the conditional density $q_{X|Y}$ as 
defined in Theorem~\ref{th:condDens}~(ii) can be
represented for all $y$ with $g(y) > 0$ and hence $Q_Y$-a.s.\ as
\begin{equation}\label{eq:condDensFJS.XY}
q_{X|Y=y}(x) \ = \ \frac{\hbar(x)}{\int \hbar(t)\,P_{X|Y=y}(dt)}, \quad x \in \Omega_X.
\end{equation}
Note that Eq.~(12) of Tasche~\cite{tasche2022factorizable} follows from \eqref{eq:condDensFJS} 
when it is adapted for categorical label spaces.
In the case where the label space $\Omega_Y$ is $\mathbb{R}$ or a subset of $\mathbb{R}$, \eqref{eq:condDensFJS} implies
the following formula for the expected value of $Y$ conditional on $X$ under the target distribution $Q$:
\begin{equation}\label{eq:prediction}
E_Q[Y\,|\,X=x] \ = \ \frac{\int y\,\gbar(y)\,P_{Y|X=x}(dy)}{\int \gbar(z)\,P_{Y|X=x}(dz)}, 
\quad Q_X\text{-a.s.\ for}\ x \in \Omega_X.
\end{equation}
\end{subequations}
Moreover, \eqref{eq:condDensFJS} suggests the following interpretation of the density factors $\hbar$ and $\gbar$ in
the definition of FJS:
\begin{itemize}
\item In general, $\gbar$ does not represent a marginal density of the label variable $Y$ as in
the case of label shift (i.e.\ when $\hbar = 1$) 
but rather is characterised by being a conditional density of $Q_{Y|X=x}$ with
respect to $P_{Y|X=x}$ for all $x\in\Omega_X$, after normalisation. Due to the symmetry of the setting in $X$ and $Y$, 
$\hbar$ is not a marginal density of the feature variable $X$ as in
the case of covariate shift but rather is characterised by being a conditional density of $Q_{X|Y=y}$ with
respect to $P_{X|Y=y}$ for all $y\in\Omega_Y$, again after normalisation.
\item Consequently, FJS is more homogeneous than non-specific distribution shift in so far as 
the target conditional densities of the labels given the features are all equal (up to normalisation). 
Based on a similar observation, Tasche~\cite{tasche2022factorizable} (Remark~1) suggested  talking about 
``scaled density ratios'' shift instead of FJS.\qed
\end{itemize}
}\end{remark}

\subsection{Solving Problem~\ref{prB} by assuming covariate shift}
\label{se:covariate}

A straightforward way to solving Problem~\ref{prB} as formulated in Section~\ref{se:shift} is
by assuming that the target distribution $Q$ and the source distribution $P$ are related through
covariate shift in the sense of Definition~\ref{de:covariatelabelShift}. 
Covariate shift is implied by ``missing at random'' sample selection bias 
(Moreno-Torres et al.~\cite{MorenoTorres2012521}, Section~6.1) 
such that there are applications where covariate shift is the only reasonable assumption.

Problem~\ref{prB} means that a target feature distribution $Q^\ast_X$ is pre-specified by its density $h$
with respect to the source feature distribution $P_X$. According to 
Proposition~\ref{pr:SpecCases}, then letting $\frac{d Q}{d P}  = h(X)$ defines a joint distribution $Q$ 
of features $X$ and labels $Y$ such that $Q$ and $P$ are related through covariate shift and 
it holds that $Q_X = Q^\ast_X$. Thus,  Problem~\ref{prB} is solved in principle.

However, Storkey~\cite{storkey2009training} (Section~5.1) commented that ``$\ldots$ 
there is some benefit to be obtained by doing some-thing different in the case of covariate shift. 
The argument here is that these papers indicate a computational benefit rather than a fundamental modelling benefit.''
The computational aspects of covariate shift have been extensively studied in the literature and are
not discussed in this paper. See for example
Section~3.1 of Kouw and Loog~\cite{Kouw&Loog2019} for a review of the 
literature on domain adaptation under covariate shift.

\subsection{Solving Problem~\ref{prB} by assuming label shift or constrained FJS}
\label{se:probB}

Recall that in Problem~\ref{prB} as 
phrased in Section~\ref{se:shift} only the feature marginal distribution $Q_X$ (or its 
density $h$ with respect to $P_X$) is known of the target distribution $Q$. While most of the time
the aim in Problem~\ref{prB} might be to come up with an estimate of the target posterior distribution $Q_{Y|X}$
of the labels conditional on the features, thanks to the law of total probability the marginal label
distribution $Q_Y$ then is a byproduct. In the literature, the quest for $Q_Y$ in the case
of categorical labels is called `quantification',
`counting', `class probability re-estimation', `class
prior estimation' or `class distribution estimation' (Esuli et al.~\cite{esuli2023learning}, p.~2).
In the case of general normed label spaces,  Azizzadenesheli~\cite{Azizzadenesheli2022} referred to the estimation
of $Q_Y$ as `Importance Weight Estimation'. In the case of non-categorical label spaces, the estimation of
$Q_Y$ also might be referred to as \emph{label distribution estimation}.

Theorem~\ref{th:charac}~(ii) suggests approaches to constructing a target distribution $Q$ related through
FJS to the source distribution $P$ that matches a predefined marginal feature distribution $Q^\ast_X$ and / or
a predefined marginal label distribution $Q^\ast_Y$, both specified through their densities $h$ and $g$ 
with respect to $P_X$ and $P_Y$ respectively. Theorem~\ref{th:charac}~(ii) does not
make any statement on the existence or uniqueness of the FJS-density factors $\gbar(Y)$ and $\hbar(X)$ 
that satisfy \eqref{eq:conditions}. Intuitively, one might nonetheless expect that the full range of
possibilities for $\gbar(Y)$ and $\hbar(X)$ be needed to find any solutions to \eqref{eq:conditions} when
both marginal distributions $Q^\ast_X$ and $Q^\ast_Y$ are predefined, i.e.\ in the situation of Problem~\ref{prA}
of Section~\ref{se:shift}. See Section~\ref{se:probA} below for a discussion of this case.

In contrast, one might expect that there is more, perhaps many more, than one pair of density factors 	
$\gbar(Y)$ and $\hbar(X)$ whose corresponding target distribution $Q$ is related to $P$ through FJS when
only -- say -- the marginal feature distribution $Q^\ast_X$ is predefined, i.e.\ in the situation of Problem~\ref{prB}
of Section~\ref{se:shift}. From previous work by the author, it is known that in the case of categorical labels
an assumption of label shift or constrained FJS then restricts the number of solutions to \eqref{eq:conditions}
in a reasonable way (Sections~4.1 and 4.2 of Tasche~\cite{tasche2022factorizable}; 
Theorem~3 of Tasche~\cite{tasche2014exact}).
In the following, the assumption of FJS is constrained by the requirement that the density factor $\gbar(Y)$
is identical with the unknown label density $g(Y)$, i.e.\ it holds that $\gbar(Y) = g(Y)$ in \eqref{eq:conditions}. 
This includes label shift ($\hbar(X) = 1$) as well as more general FJS which is subject to the constraint
\begin{equation}\label{eq:constraint}
1\ = \ E_P[\hbar(X)\,|\,\sigma(Y)].
\end{equation}

Suppose that in Theorem~\ref{th:charac}~(ii) the feature density $h$ is positive. Then \eqref{eq:conditions} 
together with the constraint \eqref{eq:constraint}
suggests the following alternating iteration for constructing the label density $g$ 
for an FJS-based solution to Problem~\ref{prB} as phrased in Section~\ref{se:shift}:
\begin{subequations}
\begin{enumerate}
\item[1)] Choose an initial label density $g_0(Y) > 0$, for instance $g_0(Y) = 1$
(i.e.\ assume $Q_Y = P_Y$ for the initialisation\footnote{%
$g_0 = 1$ is the only possible choice in the absence of any specific information about $P_Y$. 
If specific knowledge about $P_Y$ is available different initialisations might make sense to
accelerate convergence or avoid saddle points in sample-based maximum likelihood estimation.}).
\item[2)] For a given label density $g_n(Y)$, compute 
\begin{equation}\label{eq:hnB}
h_n(X) \ =\ E_P[g_n(Y)\,|\,\sigma(X)].
\end{equation}
\item[3)] For given $h_n(X)$, compute 
\begin{equation}\label{eq:gnB}
g_{n+1}(Y) \ =\  g_n(Y)\,E_P\left[\frac{h(X)}{h_n(X)}\,\Big|\,\sigma(Y)\right].
\end{equation}
\item[4)] Repeat 2) and 3) for $n = 0, 1, \ldots$ until an appropriate stop criterion is fulfilled.
\end{enumerate}
\end{subequations}

The following theorem lists some properties that make iteration \eqref{eq:hnB} and \eqref{eq:gnB} work.
Theorem~\ref{th:alternateB} generalises the appendix of Saerens et al.~\cite{saerens2002adjusting} 
``Derivation of the EM Algorithm''
from categorical label spaces to general label spaces. Thus, for non-categorical label spaces,
Theorem~\ref{th:alternateB} establishes the EM algorithm as a potential alternative to the 
`inverse operator' approach 
(comparable to `adjusted classify \& count' in the case of categorical labels) by 
 Azizzadenesheli~\cite{Azizzadenesheli2022} and the
segmentation and discretisation approaches by Bella et al.~\cite{bella2014aggregative}.
Corollary~\ref{co:KL} below suggests a stop criterion for the iteration \eqref{eq:hnB} and \eqref{eq:gnB}.

\begin{theorem}\label{th:alternateB}
In the setting of Theorem~\ref{th:charac}~(ii), assume additionally $h >0$. Choose some
$g_0(Y) > 0$ with $E_P[g_0(Y)] = 1$ and define
$h_n$ and $g_{n+1}$  for $n \ge 0$ recursively by \eqref{eq:hnB} and \eqref{eq:gnB}. 
Then the following conclusions hold true:
\begin{itemize}
\item[(i)] $0 < h_n(X)< \infty$ and $0 < g_{n+1}(Y) < \infty$ $P$-a.s.\ for all $n \ge 0$.
Moreover,  $g_n(Y)$ and $h_n(X)$ are probability
densities under the source distribution $P$ for all $n$.
\item[(ii)] Define $f_n = f_n(X,Y) = \frac{h(X)\,g_n(Y)}{h_n(X)}$ for $n \ge 0$. Then
$f_n$ is a probability density under $P$ for all $n$.
\item[(iii)] Define the probability measures $Q^{(n)}$ and $R^{(n)}$ on $(\Omega, \mathcal{F})$, 
$Q^{(n)}$ related to $P$ through label shift and $R^{(n)}$ related to $P$ through FJS, by
\begin{equation*}
\frac{d Q^{(n)}}{d P} = g_n(Y) \quad\text{and}\quad \frac{d R^{(n)}}{d P} = f_n(X,Y).
\end{equation*}
Then it holds for all $n$ that $\frac{d Q_Y^{(n)}}{d P_Y} = g_n$, $\frac{d Q_X^{(n)}}{d P_X} = h_n$, 
$\frac{d R_X^{(n)}}{d P_X} = h$ and $\frac{d R_Y^{(n)}}{d P_Y} = g_{n+1}$.
\item[(iv)] Assume that\footnote{%
The integrability assumptions hold true if there are constants $0 < c < C < \infty$ such
that $c \le h(X) \le C$ and $c \le g_0(Y) \le C$.}
$E_P[h(X)\,\lvert\log(h_n(X))\rvert] < \infty$ and $E_P[f_n\,\lvert\log(g_k(Y))\rvert] < \infty$
for all $n\ge 0$ and $k \in\{n, n+1\}$ as well as $E_P[h(X)\,\lvert\log(h(X))\rvert] < \infty$. 
Then it follows for all $n$ that 
\begin{equation}\label{eq:ineqEnt}
E_P[h(X)\,\log(h_n(X))] \ \le \ E_P[h(X)\,\log(h_{n+1}(X))].
\end{equation}
\end{itemize}
\end{theorem}

See Appendix~\ref{se:proofs} for a proof of Theorem~\ref{th:alternateB}. In the following analyses of the algorithm specified by \eqref{eq:hnB} and \eqref{eq:gnB}, 
we make use of the notion of Kullback-Leibler divergence 
(e.g.~Section~3.2 of Reid and Williamson~\cite{reid2011information}). 

\begin{definition}\label{de:KL}
Let $(\Omegabar, \mathcal{M}, \mu)$ be a probability space. For $\mathcal{M}$-Borel measurable
functions $\lambda_0 \ge 0$ and $\lambda_1 \ge 0$ with $\int \lambda_i\,d\mu =1$, 
$\mu[\lambda_i = 0] = 0$ for $i=0,1$ and 
$\int \lambda_0 \left\vert \log\left(\frac{\lambda_0}{\lambda_1} \right)\right\vert d\mu < \infty$, 
the \emph{Kullback-Leibler
divergence} $\mathrm{KL}_\mu(\lambda_0\parallel\lambda_1)$ of $\lambda_1$ with respect to $\lambda_0$ is defined as
$\mathrm{KL}_\mu(\lambda_0\parallel\lambda_1) = 
\int \lambda_0\,\log\left(\frac{\lambda_0}{\lambda_1} \right) d\mu$.
\end{definition}

Jensen's inequality implies $\mathrm{KL}_\mu(\lambda_0\parallel\lambda_1) \ge 0$ and
$\mathrm{KL}_\mu(\lambda_0\parallel\lambda_1) = 0$ if and only if $\lambda_0 = \lambda_1$ $\mu$-a.s.\
Accordingly, $\lambda_0$ and $\lambda_1$ are the more similar, the closer to naught 
$\mathrm{KL}_\mu(\lambda_0\parallel\lambda_1)$ is.
Minimising $\mathrm{KL}_\mu(\lambda_0\parallel\lambda_1)$
by changing $\lambda_1$ may be interpreted as a population version of maximum likelihood estimation.
In this sense, Theorem~\ref{th:alternateB}~(iv) can be understood as a statement about
approximation on the basis of $\mathrm{KL}$.

\begin{corollary}\label{co:KL}
In the setting of Theorem~\ref{th:alternateB}~(iv), it follows that
\begin{equation}\label{eq:KLineq}
\mathrm{KL}_{P_X}(h\parallel h_{n+1})\ \le \ \mathrm{KL}_{P_X}(h\parallel h_n),\quad \text{for all}\ 
n = 0, 1, \ldots.
\end{equation}
\end{corollary}
Corollary~\ref{co:KL} suggests stopping iteration \eqref{eq:hnB} and \eqref{eq:gnB} when
\begin{equation}\label{eq:stop}
\mathrm{KL}_{P_X}(h\parallel h_n) - \mathrm{KL}_{P_X}(h\parallel h_{n+1}) \ <\ \varepsilon
\end{equation}
for some pre-specified $\varepsilon > 0$ has been attained.

How to reconcile the iteration \eqref{eq:hnB} and \eqref{eq:gnB} and Theorem~\ref{th:alternateB} with
previous work on the EM algorithm for class distribution estimation 
(in particular, Saerens et al.~\cite{saerens2002adjusting})?
\eqref{eq:hnB} and \eqref{eq:gnB} and Theorem~\ref{th:alternateB} are valid for the general setting
of Section~\ref{se:setting}. For the reconciliation, it is useful to re-phrase \eqref{eq:hnB} and \eqref{eq:gnB}
successively under the ever more special assumptions which are presented below in Appendices~\ref{se:source} and
\ref{se:further}: 

\begin{subequations}
\emph{Regular conditional distributions (Assumption~\ref{as:condDist}).} \eqref{eq:hnB} becomes
\begin{align}
h_n(x) & = \int g_n(y)\,P_{Y|X=x}(d y),\label{eq:hnReg}\\
\intertext{and \eqref{eq:gnB} reads as}
g_{n+1}(y) & = g_n(y) \int \frac{h(x)}{h_n(x)}\,P_{X|Y=y}(d x).\label{eq:gnReg}
\end{align}
\end{subequations}

\begin{subequations}
\emph{Absolute continuity of source distribution (Assumption~\ref{as:aux}).} \eqref{eq:hnB} becomes
\begin{align*}
h_n(x) & = \int g_n(y)\,\varphi(x,y)\,P_Y(d y),\\
\intertext{and \eqref{eq:gnB} reads as}
\begin{split}
g_{n+1}(y) & = g_n(y) \int \frac{h(x)\,\varphi(x,y)}{h_n(x)}\,P_X(d x)\\
& = g_n(y) \int \frac{\varphi(x,y)}{h_n(x)}\,Q_X(d x).
\end{split}
\end{align*}
\end{subequations}

\begin{subequations}
\emph{Categorical label space (Assumption~\ref{as:discrete}).} With $\varphi(x,y) = \frac{P[Y=y\,|\,X=x]}{P[Y=y]}$
according to \eqref{eq:discreteDensity}, \eqref{eq:hnB} becomes
\begin{align}
h_n(x) & = \sum_{y=1}^d g_n(y)\,P[Y=y\,|\,X=x],\notag\\
\intertext{and \eqref{eq:gnB} reads as}
g_{n+1}(y) & = g_n(y) \int \frac{h(x)\,P[Y=y\,|\,X=x]}{h_n(x)\,P[Y=y]}\,P_X(dx)\notag\\
& = g_n(y) \int \frac{P[Y=y\,|\,X=x]}{h_n(x)\,P[Y=y]}\,Q_X(dx).\notag
\intertext{By Theorem~\ref{th:alternateB}~(iii), it holds that $g_n(y) = \frac{Q^{(n)}[Y=y]}{P[Y=y]}$. This
implies}
\frac{Q^{(n+1)}[Y=y]}{P[Y=y]} & = \frac{Q^{(n)}[Y=y]}{P[Y=y]} 
	\int \frac{P[Y=y\,|\,X=x]}{P[Y=y] \sum\limits_{z=1}^d \frac{Q^{(n)}[Y=z]}{P[Y=z]} \,P[Y=z\,|\,X=x]}\,Q_X(dx)\notag\\
\intertext{and equivalently}
Q^{(n+1)}[Y=y] & = \frac{Q^{(n)}[Y=y]}{P[Y=y]} 
	\int \frac{P[Y=y\,|\,X=x]}{\sum\limits_{z=1}^d \frac{Q^{(n)}[Y=z]}{P[Y=z]} \,P[Y=z\,|\,X=x]}\,Q_X(dx).
	\label{eq:saerens}
\end{align}
Substituting a sample-based empirical measure $\widetilde{Q}_X = \frac{1}{n}\sum_{i=1}^n \delta_{x_i}$ for $Q_X$ in
\eqref{eq:saerens} yields the EM algorithm (2.9) of Saerens et al.~\cite{saerens2002adjusting}, i.e.\
for $y = 1, \ldots, d$
\begin{equation}\label{eq:saerensEmp}
\widetilde{Q}^{(n+1)}[Y=y] \ =\ \frac{\widetilde{Q}^{(n)}[Y=y]}{n\,P[Y=y]} 
	\sum_{i=1}^n \frac{P[Y=y\,|\,X=x_i]}{\sum_{z=1}^d 
	\frac{\widetilde{Q}^{(n)}[Y=z]}{P[Y=z]} \,P[Y=z\,|\,X=x_i]}.
\end{equation}
\end{subequations}

\begin{remark}[Interpretation of Theorem~\ref{th:alternateB}]\emph{%
Compared to (2.9) of Saerens et al.~\cite{saerens2002adjusting}, it is less obvious how to interpret 
\eqref{eq:hnB} and \eqref{eq:gnB}
in terms of expectation-maximisation. Instead, thanks to Theorem~\ref{th:alternateB}~(iii), 
the algorithm specified by \eqref{eq:hnB} and \eqref{eq:gnB} can be interpreted as rendering two
solutions for Problem~\ref{prB} from Section~\ref{se:shift}:
\begin{enumerate}
\item[1)] An approximate solution $Q^{(n)}$ based on a label shift assumption. As a consequence of Corollary~\ref{co:KL},
each further step of the algorithm provides
an approximate target distribution whose marginal feature density $h_{n+1}$ is more similar to the
predefined feature density $h$ than the feature density $h_n$ of the previous approximate target distribution.
\item[2)] An exact fit solution $R^{(n)}$ related to the source distribution $P$ through FJS, 
with the same label distribution as the label shift approximation.\qed
\end{enumerate}
} 
\end{remark}

Although the decreasing sequence $\mathrm{KL}_{P_X}(h\parallel h_n)$ converges to a limit $\mathrm{KL}^\ast$
for $n \to \infty$, 
Theorem~\ref{th:alternateB} and Corollary~\ref{co:KL} do not exclude the possibility that $\mathrm{KL}^\ast > 0$ 
which would suggest that
the predefined target feature density $h$ cannot be realised by any marginal target feature density 
induced by a target joint distribution related to the source distribution $P$ through `pure'
label shift.
Neither does Theorem~\ref{th:alternateB} imply that there is a 
density $h^\ast$ with $\mathrm{KL}_{P_X}(h\parallel h^\ast) = \mathrm{KL}^\ast$ or that the $Q^{(n)}$
converge to any limit distribution. For the case of a categorical label space, Peters and 
Coberly~\cite{peters1976numerical} and Wu~\cite{Wu1983EM}
provided sufficient conditions for the sequence $g_n$, $n = 0, 1, \ldots$, of target label densities 
to converge (which would entail convergence of the related target joint distributions $Q^{(n)}$).

For the case of a general label space, the following Proposition~\ref{pr:convergence} presents consequences of 
the potential convergence of the target label densities $g_n$ which are constructed with the variant of the 
EM algorithm as described by \eqref{eq:hnB} and \eqref{eq:gnB}. \\
Recall the definitions of \emph{convergence in probability} and \emph{convergence in mean}:
\begin{itemize}
\item A sequence of real-valued random variables $Z_n$, $n=0, 1, \ldots$, converges in probability under
a probability measure $P$ to a real-valued random variable $Z$ if it holds that
$\lim_{n\to\infty} P[\lvert Z - Z_n\rvert \ge \varepsilon] = 0$ for all $\varepsilon > 0$ 
(see for instance Klenke~\cite{klenke2013probability}, Definition~6.2).
\item A sequence of integrable real-valued random variables $Z_n$, $n=0, 1, \ldots$, converges in mean under
a probability measure $P$ to an integrable real-valued random variable $Z$ if it holds that
$\lim_{n\to\infty} E_P[\lvert Z - Z_n\rvert] = 0$ (Klenke~\cite{klenke2013probability}, Definition~6.8).
\end{itemize}
Convergence in mean implies convergence in probability.

\begin{proposition}\label{pr:convergence}
In addition to the assumptions and definitions of Theorem~\ref{th:alternateB}, suppose that the densities $g_n$ 
with respect to $P_Y$ defined in Theorem~\ref{th:alternateB} converge in probability to a random variable $g> 0$. 
Furthermore, assume\footnote{%
This assumption on the exchangeability of integration and $\lim$ in particular holds true if there is 
some $M > 0$ such that $g_n \le M$ for all $n = 0, 1, \ldots$ and $y \in \Omega_Y$.}  that $\int g(y)\,P_Y(dy) =1$ 
such that $g$ also is a density with respect to $P_Y$.\\
Define a joint distribution $\Qhat$ of $X$ and $Y$ related to $P$ through label shift by
$\frac{d \Qhat}{d P} = g(Y)$  (such that $\frac{d \Qhat_Y}{d P_Y} = g$ follows).
Let $\hhat(x) = \frac{d \Qhat_X}{d P_X}(x) = E_P[g(Y)\,|\,X =x]$ for $x \in \Omega_X$ and define
the feature densities $h_n = E_P[g_n(Y)\,|\,X =\cdot]$ of the joint distributions $Q^{(n)}$ as in 
Theorem~\ref{th:alternateB} by \eqref{eq:hnB}.\\
Then the following statements hold true:
\begin{enumerate}
\item[(i)] $g_n$ converges also in mean to $g$ under $P_Y$,
\item[(ii)] $\hhat > 0$, and $h_n$ converges in mean to $\hhat$ under $P_X$,
\item[(iii)] $\displaystyle\mathrm{KL}_{P_X}(h \parallel \hhat) \ \le\ \lim\limits_{n\to\infty} 
\mathrm{KL}_{P_X}(h \parallel h_n)$,
\item[(iv)] 
$\displaystyle{} E_P\left[\frac{h(X)}{\hhat(X)}\,\Big|\,\sigma(Y)\right] \ \le \ 1$ $P$-a.s.
\end{enumerate}
\end{proposition}
See Appendix~\ref{se:proofs} for a proof of Proposition~\ref{pr:convergence}.

\begin{remark}\label{rm:achieved} \emph{Suppose that $E_P\left[\frac{h(X)}{\hhat(X)}\,\Big|\,\sigma(Y)\right] = 1$ 
holds in Proposition~\ref{pr:convergence}~(iv). Then $\hbar(X) = \frac{h(X)}{\hhat(X)}$ satisfies
\eqref{eq:constraint} and by Theorem~\ref{th:charac}~(ii) $\frac{d Q}{d P} = \frac{g(Y)\,h(X)}{\hhat(X)}$ defines
a target distribution $Q$
related to $P$ through FJS in the sense of Definition~\ref{de:FJS}. Moreover, it holds
that $Q_X = Q_X^\ast$ for the predefined feature distribution $Q_X^\ast$ from Theorem~\ref{th:charac}~(ii).
Hence Problem~\ref{prB} of Section~\ref{se:shift} is
solved by constructing a target distribution $Q$ with the predefined
marginal feature distribution under an assumption of FJS composed by the two following elementary shifts:
\begin{enumerate}
\item A label shift based on the target label density $g$ resulting from the iteration \eqref{eq:hnB} and
\eqref{eq:gnB}.
\item A covariate shift based on the ratio $h / \,\hhat$ of the predefined or observed target feature 
density $h$ and the target feature density $\hhat = E_P[g(Y)\,|\,X=\cdot]$ implied by the previous label shift,
without any further change of the target posterior label distribution.
\end{enumerate}
Alternatively, $\frac{d Q_Y}{d P_Y} = g$ may be considered a solution to the label distribution estimation
problem for the given target feature density $h$ under the assumption of label shift between source and target.}  
\hfill$\qed$
\end{remark}

Proposition~\ref{pr:convergence} and Remark~\ref{rm:achieved} confirm for general label spaces 
that if the sequence of densities $g_n$
constructed with the EM algorithm converges to a positive limit density $g$, 
then this density $g$ is a promising candidate to solve \eqref{eq:conditions}
under the constraint \eqref{eq:constraint} and as a consequence through the FJS-relation between $P$ and $Q$ 
also Problem~\ref{prB} as formulated in Section~\ref{se:shift}.  
Thereby, Remark~\ref{rm:achieved} resumes for general label spaces the `exact fit' 
property proven by Tasche~\cite{tasche2014exact} (Remark~1) for the case of 
categorical labels. `Exact fit' as discussed by Tasche~\cite{tasche2014exact} means that there is a target joint distribution 
such that the ratios of pairs of feature densities
conditional on the labels do not depend on feature values (cf.~Remark~\ref{rm:howConstrained} above regarding this
property for FJS in the case of non-categorical labels) and that the target marginal feature distribution 
exactly matches a pre-specified or observed feature distribution. 

Note that exact fit in the sense of matching
a target feature distribution with a constrained FJS (by requiring \eqref{eq:constraint}) is
not always possible as pointed out by Titterington et al.~\cite{titterington1985statistical} 
(Example~4.3.1) in the context of
maximum likelihood estimation of class prior probabilities in the case of binary labels. This observation 
does not contradict Proposition~\ref{pr:convergence} or Remark~\ref{rm:achieved} because in cases of 
imperfect fit one or more of the conditions 
listed in the proposition and the remark would be violated.

Observe that the pre-specified target feature distribution $Q^\ast_X$ in the shape of its $P_X$-density $h$
is a factor impacting the recursive definition \eqref{eq:gnB} of the label densities $g_n$ which are
constructed when running the EM algorithm. This observation raises the question if the limit feature 
density $\hhat$ in Proposition~\ref{pr:convergence}  
perhaps in many cases is identical with the pre-specified feature density $h$, at
least if the label densities $g_n$ converge to a positive limit density such that all conditions 
of Proposition~\ref{pr:convergence} are satisfied? In other words, should one expect 
$\mathrm{KL}_{P_X}(h \parallel \hhat)=0$ (or equivalently $h = \hhat$) in Proposition~\ref{pr:convergence}?
If so, exact fit would appear to be achievable with label shift
quite generally. However, the example of normal mixture target feature 
distributions on the real line with $N+1$ components shows that
exact fit with label shift in general is impossible because no $(N+1)$-modal target feature density on the real line 
can be exactly matched with an $N$-components normal mixture density (in the case where the $N$ source class-conditional
feature distributions are normal). Hence, often the 
term $\mathrm{KL}_{P_X}(h \parallel \hhat)$ in Proposition~\ref{pr:convergence} 
will be positive such that the fit of the
pre-specified feature distribution with the limiting label shift target distribution is imperfect.

Does imperfect fit of the pre-specified feature distribution by an approximate solution of Problem~\ref{prB} 
matter in practice?
\begin{itemize}
\item The answer is no if it is the target \emph{posterior label distribution} given the features
 that is important. This
assessment follows from \eqref{eq:condDensFJS} that shows that the target posterior label distribution 
given the features is the same for label shift and for more general FJS 
as long as \eqref{eq:constraint} is fulfilled (implying $\gbar = g$) and the target label distribution is the same.
\item The answer is yes if it is the target \emph{posterior feature distribution} given the label that is important.
Under label shift (i.e.\ when the fit is possibly imperfect) the target and source posterior 
feature distributions given the label
are equal, i.e.\ it holds that $q_{X|Y}=1$ for the conditional densities 
defined in Theorem~\ref{th:condDens}~(ii) below. This need not be the case when there is exact fit as in 
Remark~\ref{rm:achieved} thanks to FJS subject to \eqref{eq:constraint}. This observation 
is a consequence of Remark~\ref{rm:howConstrained} because under FJS according to \eqref{eq:condDensFJS.XY}
the target  posterior feature densities given the label may observably differ as long as 
their ratios are functions of the label only. 
See Figure~1 of Tasche~\cite{tasche2017fisher} for an example of this phenomenon.
\end{itemize}

\subsection{Solving Problem~\ref{prA} by assuming FJS}
\label{se:probA}

Suppose that in Theorem~\ref{th:charac}~(ii) the densities $g$ and $h$ are positive. Then, similarly to
the approach in Section~\ref{se:probB} above, \eqref{eq:conditions} 
suggests the following alternating iteration for constructing the density factors $\gbar$ and $\hbar$ 
for an FJS-based solution to Problem~\ref{prA} as phrased in Section~\ref{se:shift}:
\begin{subequations}
\begin{enumerate}
\item[1)] Choose\footnote{%
Note that for $\widetilde{g}_0(Y) = c\,\gbar_0(Y)$ with $c>0$ constant and
$\widetilde{h}_n(X)$ and $\widetilde{g}_{n+1}(y)$ defined by \eqref{eq:hn} and \eqref{eq:gn} with
$\gbar_0$ replaced with $\widetilde{g}_0$, it follows that $\widetilde{g}_n(Y) =c\,\gbar_n(Y)$ and
$\widetilde{h}_n(X) = \hbar_n(X) / c$ for all $n \ge 0$. Thus the ambiguity exposed in \eqref{eq:hbarc}
re-emerges in the proposed iteration.
} $\gbar_0(Y) > 0$, for instance $\gbar_0(Y) = g(Y)$.
\item[2)] For given $\gbar_n(Y)$, compute 
\begin{equation}\label{eq:hn}
\hbar_n(X) \ =\ \frac{h(X)}{E_P[\gbar_n(Y)\,|\,\sigma(X)]}.
\end{equation}
\item[3)] For given $\hbar_n(X)$, compute 
\begin{equation}\label{eq:gn}
\gbar_{n+1}(Y) \ =\  \frac{g(Y)}{E_P[\hbar_n(X)\,|\,\sigma(Y)]}.
\end{equation}
\item[4)] Repeat 2) and 3) for $n = 0, 1, \ldots$ until an appropriate stop criterion is fulfilled.
\end{enumerate}
\end{subequations}
Convergence of the sequences $\gbar_n(Y)$ and $\hbar_n(X)$ is not obvious, and no proof of convergence is provided 
here. However, if there is convergence then it is pausible
that \eqref{eq:hn} and \eqref{eq:gn} imply \eqref{eq:conditions}. Moreover, the following
proposition shows that the iteration \eqref{eq:hn} and \eqref{eq:gn} generates two sequences
of joint distributions which are related to the source distribution $P$ through FJS and 
match the pre-defined feature distribution (specified by the density $h$) or the
predefined label distribution (specified by the density $g$).

\begin{proposition}\label{pr:alternate}
In the setting of Theorem~\ref{th:charac}~(ii), assume additionally $g >0$ and $h >0$. Choose some
$0 < \gbar_0(Y) < \infty$ and define
$\hbar_n$ and $\gbar_{n+1}$  for $n \ge 0$ recursively by \eqref{eq:hn} and \eqref{eq:gn}. 
Then it follows that $0 < \hbar_n(X)< \infty$ and $0 < \gbar_{n+1}(Y) < \infty$ $P$-a.s.\ for all $n \ge 0$.
Moreover,  $\gbar_n(Y)\,\hbar_n(X)$ and $\gbar_{n+1}(Y)\,\hbar_n(X)$ are probability
densities under the source distribution $P$ for all $n$. \\
Define the probability measures $Q^{(n)}$ and $R^{(n)}$ on $(\Omega, \mathcal{F})$, 
both related to $P$ through FJS, by
\begin{equation*}
\frac{d Q^{(n)}}{d P} = \gbar_n(Y)\,\hbar_n(X)\quad\text{and}\quad \frac{d R^{(n)}}{d P} = \gbar_{n+1}(Y)\,\hbar_n(X).
\end{equation*}
Then it holds for all $n$ that $\frac{d Q_X^{(n)}}{d P_X} = h$ and $\frac{d R_Y^{(n)}}{d P_Y} = g$.
\end{proposition}

How to choose a stop criterion for the iteration specified by \eqref{eq:hn} and \eqref{eq:gn}?\\
As one hopes that $\hbar_n$ and $\gbar_n$ converge, it would be natural to stop the iteration 
when the difference of $\hbar_{n+1}$ and $\hbar_n$ and the difference of $\gbar_{n+1}$ and $\gbar_n$
both fall below some predefined threshold. However, $\hbar_n$ and $\gbar_n$ in general are not densities
such that selecting an appropriate metric for measuring the differences is not straightforward.
 
Observe that for the distributions $Q^{(n)}$ and $R^{(n)}$ defined in Proposition~\ref{pr:alternate} 
it holds that
\begin{equation}\label{eq:marginals}
\begin{split}
\frac{d Q_Y^{(n)}}{d P_Y}(y) & = \gbar_n(y)\,E_P[\hbar_n(X)\,|\,Y=y] = g_n(y), \quad\text{and}\\
\frac{d R_X^{(n)}}{d P_X}(x) & = \hbar_n(x)\,E_P[\gbar_{n+1}(Y)\,|\,X=x] = h_n(x).
\end{split}
\end{equation}
The aim with iteration \eqref{eq:hn} and \eqref{eq:gn} is to have the marginal densities $g_n$ and $h_n$
according to \eqref{eq:marginals}
converge to the predefined label density $g$ and feature density $h$ respectively. Hence it makes sense
to base the stop criterion on divergences between $h_n$ and $h$ and $g_n$ and $g$ respectively, 
analogously to \eqref{eq:stop}. 
Since in contrast to Section~\ref{se:probB} convergence of iteration \eqref{eq:hn} and \eqref{eq:gn}
cannot be guaranteed, the stop criterion must also include a maximum number of iteration steps.

In \eqref{eq:hn} and \eqref{eq:gn} the density factor $\hbar(X)$ actually is only an auxiliary 
variable that can be eliminated in order to arrive at another version of the iteration:
\begin{enumerate}
\item[1)] Choose $\psi_0(Y) > 0$, for instance $\psi_0(Y) = 1$.
\item[2)] For given $\psi_n(Y)$, compute 
\begin{equation}\label{eq:psin}
\psi_{n+1}(Y) \ =\  \left(E_P\left[\frac{h(X)}
	{E_P\bigl[\psi_n(Y)\,g(Y)\,|\,\sigma(X)\bigr]}\,\bigm|\,\sigma(Y)\right]\right)^{-1}.
\end{equation}
\item[3)] Repeat 2) for $n = 0, 1, \ldots$ until an appropriate stop criterion is fulfilled.
\end{enumerate}
With $\gbar_n(Y) = \psi_n(Y)\,g(Y)$ and \eqref{eq:hn} for $\hbar(X)$, the density factors appearing in the original 
iteration \eqref{eq:hn} and \eqref{eq:gn} can be reconstructed from $\psi_n(Y)$.

Under Assumption~\ref{as:aux} `Absolute continuity of source distribution', 
\eqref{eq:psin} can be simplified thanks to Proposition~\ref{pr:condDist} such
that for $y \in \Omega_Y$ the iteration reads:
\begin{equation}\label{eq:simplified}
\begin{split}
\psi_n(y) & =  \left(\int \frac{h(x)\,\varphi(x,y)}{\int \psi_n(z)\,g(z)\,\varphi(x,z)\,P_Y(dz)} 
				\,P_X(d x)\right)^{-1} \\
	& = \left( \int \frac{\varphi(x,y)}{\int \psi_n(z)\,\varphi(x,z)\,Q_Y(dz)}\, Q_X(d x)\right)^{-1}.
\end{split}
\end{equation}
On the one hand \eqref{eq:simplified} is useful because the integrals with respect to conditional distributions have been
replaced with integrals with respect to less complex marginal distributions. On the other hand, 
\eqref{eq:simplified} (thanks to the second row) allows to make use of observations of 
the target feature variable  and the target label variable if the integrals are approximated by sample means. 

Note that the existence of $\lim_{n\to\infty} \psi_n(y) = \psi^\ast(y)$ for all $y$ 
of the sequence defined by \eqref{eq:psin} does not necessarily always imply that $\psi^\ast$ brings about
a solution to \eqref{eq:conditions}
because without further conditions the possibility cannot be excluded that
\begin{equation}\label{eq:neqIntegral}
\lim\limits_{n\to\infty}  \int \frac{\varphi(x,y)}{\int \psi_n(z)\,\varphi(x,z)\,Q_Y(dz)}\, Q_X(d x)
\ \neq \ \int \frac{\varphi(x,y)}{\int \lim\limits_{n\to\infty} \psi_n(z)\,\varphi(x,z)\,Q_Y(dz)}\, Q_X(d x).
\end{equation}
\eqref{eq:neqIntegral} can even be an issue in the case of a categorical label space. For the case of binary labels,
Tasche~\cite{tasche2022factorizable} (Proposition~2) proved existence and uniqueness of a solution
to \eqref{eq:conditions} in terms of $\varrho(y) = \frac{\psi^\ast(y)}{\psi^\ast(y_0)} = 
\frac{\gbar(y)\,g(y_0)}{\gbar(y_0)\,g(y)}$ with some fixed $y_0 \in \Omega_Y$. 
 
\subsection{The EM algorithm in the case of finite categorical feature spaces}
\label{se:EMfinite}

In Section~\ref{se:probB}, a version of the  EM algorithm for class distribution estimation 
(Saerens et al.~\cite{saerens2002adjusting}, Eq.~(2.9)) has been formulated under general assumptions on the label and features spaces.
Then it has been shown that in theory this general EM algorithm  provides a sequence of label 
shifts to the source distribution
such that the Kullback-Leibler divergence between the predefined target feature distribution and
the implied target feature distributions decreases with each step of the algorithm. In the case where
after convergence of the algorithm the final implied target feature distribution does not yet fully match the
predefined target feature distribution, exact fit might be achieved by an additional covariate shift as
explained in Remark~\ref{rm:achieved}.

Consider the following equivalent representation of the EM algorithm as described in \eqref{eq:hnReg} and 
\eqref{eq:gnReg} for $n \ge 0$ and some initial $P_Y$-density $g_0$ under Assumption~\ref{as:condDist} from
Appendix~\ref{se:facilitating}:
\begin{subequations}
\begin{align}
h_n(x) & \ = \ \int g_n(z)\,P_{Y|X=x}(d z),\quad x \in \Omega_X,\label{eq:hnCat}\\
g_{n+1}(y) & \ = \ g_n(y) \int \frac{h(x)}{h_n(x)}\,P_{X|Y=y}(dx), \quad y \in \Omega_Y, \notag\\
& \ = \ g_0(y) \prod_{i=0}^n \int \frac{h(x)}{h_i(x)}\,P_{X|Y=y}(dx), \quad y \in \Omega_Y.
	\label{eq:EMconsolidated}
\end{align}
\end{subequations}
\eqref{eq:hnCat} and \eqref{eq:EMconsolidated} indicate the heavy computational burden due to 
iterated and nested integration
the EM algorithm could entail in the face of general label and feature spaces. Unsurprisingly, research on
the EM algorithm for class distribution estimation has focussed on the case of categorical label spaces
(Saerens et al.~\cite{saerens2002adjusting}, Esuli et al.~\cite{Esuli2021critical}). 

In  the case of categorical labels, by \eqref{eq:saerens}, the computational effort is significantly 
reduced by the disappearance
of the nested integration which is encountered in \eqref{eq:EMconsolidated}. Substituting the empirical distribution
of the observed target distribution for $Q_X$ as in \eqref{eq:saerensEmp}  further reduces the computational burden
such that the estimation of well-calibrated source label posterior
probabilities $P[Y=k\,|\,X=x]$, $y = k, \ldots, d$, $x\in \Omega_X$, becomes 
the main challenge with the algorithm (Alexandari et al.~\cite{alexandari2020maximum}, 
Esuli et al.~\cite{Esuli2021critical}).

Another way to avoid the nested integration in \eqref{eq:EMconsolidated} is to make an assumption for the feature space
analogous to Assumption~\ref{as:discrete} for the labels.
\begin{assumption}[Categorical feature space]\label{as:features}
In the setting of Section~\ref{se:setting}, the feature space $\Omega_X$ is finite
with $\Omega_X = \{1, \ldots, d\}$, $d \ge 2$. The $\sigma$-algebra
$\mathcal{H}$ is the power set of $\Omega_X$, i.e.\ $\mathcal{H}=\mathfrak{P}(\Omega_X)$.
\end{assumption}
Observe that Assumption~\ref{as:features} implies Assumption~\ref{as:aux} if $0 < P[X=i]$, $i = 1, \ldots, d$, 
with 
\begin{equation}\label{eq:discreteFeatures}
\varphi(i,y) = \frac{P[x=i\,|\,Y=y]}{P[X=i]}, \quad y \in \Omega_Y, i \in \Omega_X = \{1, \ldots, d\}.
\end{equation}

The combination of categorical features and labels with non-categorical values for regression problems
where the labels are to be predicted based on feature observations might appear less relevant and uncommon.
However, there are potential applications for findings in this setting:
\begin{itemize}
\item Estimating multivariate densities is harder than estimating univariate densities. 
Therefore, if the feature space is high-dimensional Euclidean and the label space is the real line it could
be computationally convenient to work with the empirical measure for the features (hence assuming the 
feature space to be finite) but to estimate a continuous (or mixed discrete-continuous) density for
the labels. 
\item In Natural Language Processing (NLP), features are categorical even if the feature space typically
is very large. For instance, Hopkins and King~\cite{hopkins2010method} describe a social 
sciences setting for class distribution estimation with digitized text as features.
\item A motor insurance company which predicts the size $Y$ of the 
claims (i.e.\ continuous labels with a point mass in $0$) incurred with a policy from 
categorical feature data $X$ of the policy holders like type of vehicle, age or region of domicile.
It appears natural to predict next year's distribution of the claim sizes  aggregated over
all policies (i.e.\ $Q_Y$) based on the assumption that the claim size (label) distribution conditional on the features
remains the same as in the current year (i.e.\ $Q_{Y|X} = P_{Y|X}$). This effectively means assuming covariate shift
in the sense of Definition~\ref{de:covariatelabelShift}~(i)
for the evolution of the joint distribution of claim size and policy features between the current
and next year.\\
However, one also could assume label shift (Definition~\ref{de:covariatelabelShift}~(ii)) instead. 
This would mean that the probabilities
of feature categories conditional on the incurred claim sizes (labels) remain constant while
the unconditional claim size (label) distribution potentially changes, i.e.\ $Q_{X|Y} = P_{X|Y}$
but potentially $Q_Y \neq P_Y$.
\item In some areas of application, discretisation of features is common. See for instance Schutte et al.~\cite{schutte2026impact}
for the area of credit risk and R\"over and Friede~\cite{Roever2017discrete} for a principled approach to discretisation.
\end{itemize}

\begin{subequations}
Under Assumption~\ref{as:features}, in the EM iteration \eqref{eq:hnCat} remains unchanged but
the number of equations is finite instead of possibly uncountably infinite for general feature spaces.
Regarding \eqref{eq:EMconsolidated}, under Assumption~\ref{as:features} there is indeed a 
computationally lighter version:
\begin{equation}\label{eq:gnCat}
g_{n+1}(y) \ = \ g_0(y) \prod_{i=0}^n \sum_{x=1}^d \frac{h(x)}{h_i(x)}\,P[X=x\,|\,Y=y], \quad y \in \Omega_Y.
\end{equation}
Note that under Assumption~\ref{as:features}, in the setting of Theorem~\ref{th:charac}~(ii) 
and Theorem~\ref{th:alternateB} the feature densities $\frac{d Q_X^\ast}{d P_X} = h$ and 
$\frac{d Q^{(n)}_X}{d P_X} = h_n$ can be represented for $x \in \Omega_X$ with $P[X=x] > 0$ as
\begin{equation}
h(x) \ = \ \frac{Q^\ast[X=x]}{P[X=x]} \quad \text{and}\quad h_n(x) \ = \ \frac{Q^{(n)}[X=x]}{P[X=x]}.
\end{equation}
\end{subequations}

The following example suggests that under Assumption~\ref{as:features} the EM algorithm in the
face of non-categorical labels and categorical features is computationally manageable at least for
small feature space sizes and parametric continuous label distributions.

\begin{table}
\begin{center}
\caption{Results for Example~\ref{ex:contLabels}.}
\label{tab:results}
\begin{tabular}{|c|c|c|c|}\hline
Initial label density & \# iterations & $Q^{(n)}[X=1]$ & Final KL\\\hline\hline
$N(0,1)$ & 42 & 0.39992 & $1.4 \times 10^{-8}$\\\hline
$N(1, 0.8)$ & 52 & 0.39989 &  $2.5 \times 10^{-8}$ \\\hline
$N(-0.5, 1.5)$ & 25 & 0.39995 & $4.8 \times 10^{-9}$ \\\hline
\end{tabular}
\end{center}  
\end{table}

\begin{example}\label{ex:contLabels}\emph{%
Let $\Omega_Y = \mathbb{R}$ and $\Omega_X = \{1,2\}$.
Define the source distribution $P$ by 
$Y \sim N(\mu,\sigma)$ for some constants $\mu \in \mathbb{R}$ and $\sigma > 0$, and
$P[X = 1\,|\,Y=y] = \left(1 + \exp\left(-\frac{y-a}{b}\right)\right)^{-1}$ for some constants $a\in\mathbb{R}$
and $b > 0$.\\
The target feature distribution is pre-specified as $Q^\ast[X=1] = q = 1 - Q^\ast[X=2]$ for some constant
$0 < q < 1$.
For the computation, the parameters for $P$ and $Q^\ast$ are fixed as follows:
\begin{equation*}
\mu = 0,\ \sigma=1; \quad a = -2,\ b = 1; \quad q = 0.4.
\end{equation*}
Observe that from this choice of parameters it follows that $P[X=1] \approx 0.15546 \neq Q^\ast[X=1]$ such
that there is some distribution shift between $P$ and $Q$. We model the distribution shift through covariate shift and
label shift and calculate the resulting label densities.\\
Under covariate shift and Assumption~\ref{as:features}, the target label density $g$ with respect to the
source label distribution is determined by
\begin{equation*}
g(y) \ =\ E_P[h(X)\,\vert\,Y=y]\ =\ \sum_{x=1}^d \frac{Q^\ast[X=x]}{P[X=x]}\,P[X=x\,\vert\,Y=y], \quad y \in \Omega_Y.
\end{equation*}
Under the label shift assumption, we run the EM algorithm described by \eqref{eq:hnCat} and \eqref{eq:gnCat} 
for three normal initial label densities $g_0$ and stop the iteration according to criterion~\eqref{eq:stop} with $\varepsilon =
10^{-8}$.\\
Table~\ref{tab:results} describes the initial label densities and lists the results of the calculations: 
Number of iterations performed (\# iterations), final approximate target feature distribution ($Q^{(n)}[X=1]$), 
final KL divergence between predefined target feature density and approximate target feature density (Final KL).\\
Figure~\ref{fig:1} shows the source label Lebesgue density, the target label Lebesgue density under covariate shift 
and the final approximate target Lebesgue label densities under label shift based on the different initial label
densities.
} \hfill \qed
\end{example}

\begin{figure}
  \centering
  \caption{Source label density vs.\ covariate shift target label density and EM target label densities with initialisations 
  according to Table~\ref{tab:results}.}\label{fig:1}
  \includegraphics[width=12cm]{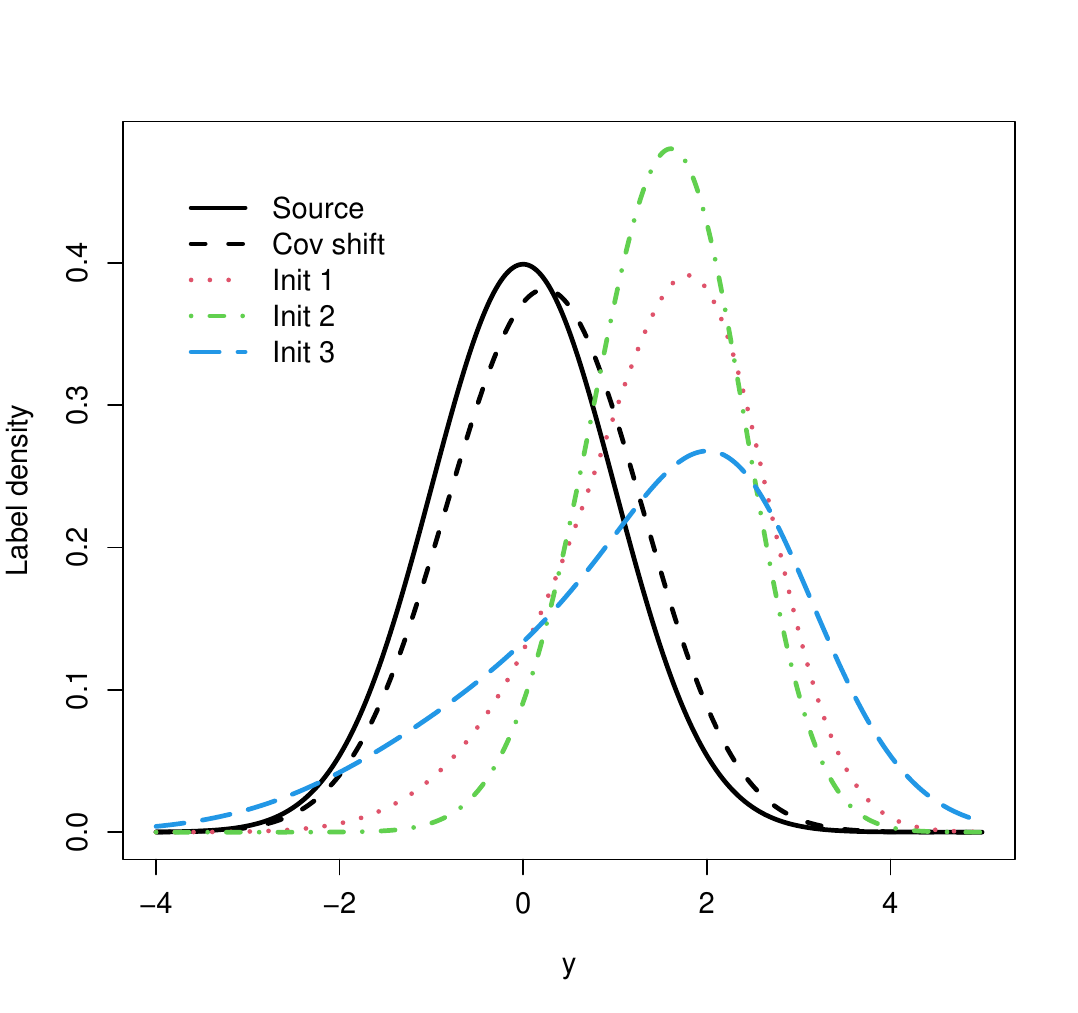}
\end{figure}

There are three main observations from Example~\ref{ex:contLabels}:
\begin{itemize}
\item[1)] At the halt of the EM algorithm, the predefined target feature distribution is almost perfectly matched
for all three different initialisations.
\item[2)] The target label density resulting from the EM algorithm depends conspicuously on the initialisation.
\item[3)]
All three target label densities resulting from the label shift assumptions clearly differ from the target label density
resulting from the covariate shift assumption.
\end{itemize}

Observation~1) is no incident but a typical outcome under Assumption~\ref{as:features}. Suppose as in
Proposition~\ref{pr:convergence} that the target label densities $g_n$ calculated with the EM algorithm
converge to a limit density $g$. Then also the target feature densities $h_n$ constructed when running
the algorithm converge to some density $\hhat$ which in general is not identical with the predefined target
feature density $h$ (cf.\ Remark~\ref{rm:achieved}). However, the following corollary to Proposition~\ref{pr:convergence}
shows that under Assumption~\ref{as:features} $h = \hhat$ is likely to occur.

\begin{corollary}\label{co:exactFit}
In the setting and with the notation of Theorem~\ref{th:charac}~(ii), 
Theorem~\ref{th:alternateB} and Proposition~\ref{pr:convergence}, 
suppose in addition that Assumption~\ref{as:features} holds.
If the source posterior probabilities $y \mapsto P[X=1\,|\,Y=y], \ldots, y \mapsto P[X=d\,|\,Y=y]$ are 
linearly independent in the space of $P_Y$-integrable random variables, then it follows that
$Q^\ast[X=x] = \widehat{Q}[X=x]$ for all $x \in \Omega_X = \{1, \ldots, d\}$.
\end{corollary}

\begin{proof}[Proof of Corollary~\ref{co:exactFit}.]
Close inspection of the proof of Proposition~\ref{pr:convergence} reveals that under Assumption~\ref{as:features}
in Proposition~\ref{pr:convergence}~(iv) `$\le$' can be replaced with `$=$', i.e.\ it holds that
\begin{equation*}
E_P\left[\frac{h(X)}{\hhat(X)}\,\Big|\,\sigma(Y)\right] \ = \ 1.
\end{equation*}
Taking into account that $h(x) = \frac{Q^\ast[X=x]}{P[X=x]}$ and $\hhat(x) = \frac{\widehat{Q}[X=x]}{P[X=x]}$ 
under Assumption~\ref{as:features} implies
\begin{equation*}
\sum_{x=1}^d \left(\frac{Q^\ast[X=x]}{\widehat{Q}[X=x]} -1\right) P[X=x\,|\,Y=y] \ = \ 0, 
\quad P_Y\text{-a.s.\ for all}\ y \in \Omega_Y.
\end{equation*}
The assertion now follows from the linear independence of $P[X=1\,|\,Y=\cdot], \ldots,$ $P[X=d\,|\,Y=\cdot]$.
\end{proof}

Linear independence of $P[X=1\,|\,Y=\cdot]$ and $P[X=2\,|\,Y=\cdot]$ is obvious in Example~\ref{ex:contLabels}.
But it also is likely to occur if the cardinality of the label space $\Omega_Y$ (e.g.\ $\Omega_Y = \mathbb{R}$)
is much greater than the cardinality $d$ of the feature space $\Omega_X$ and the posterior probabilities are not
`flat', i.e.\ the dependence between the features and the labels under the source distribution is not too weak. 

We may draw the following conclusions from Example~\ref{ex:contLabels} for the use of the 
EM algorithm for label distribution estimation under 
Assumption~\ref{as:features} of categorical features in the face of label shift:
\begin{itemize}
\item The EM algorithm solves the label distribution estimation problem under label shift for real-valued labels, 
even with exact fit of the predefined target feature distribution.
\item However, the resulting target label density depends on the initial label density at the start of the
algorithm. As all initial densities entail exact fit of the target feature distribution there is no
obvious criterion that would make any initial density preferable.
\item Nonetheless, the source label density is an obvious choice for the initial density -- which in some
cases might suffice for justification of its selection as initial density.
\end{itemize}

\section{Generalized label shift (GLS)}
\label{se:GLS}

A popular approach to tackling distribution shift (i.e.~domain adaptation) is representation learning.
The aim with representation learning is to find a mapping of the features into another feature space such
that for the joint distribution of the mapped features and the labels the distribution shift is removed or
reduced to a more manageable type of shift. Tachet des Combes et al.~\cite{tachetdescombes2020domain} proposed 
with `generalized label shift' (GLS) a variant of representation learning where the representation of the
features results in label shift between the source and the target distributions. Wu~\cite{wu2025prominent}
use the term `Conditionally invariant components' for GLS and suggest an approach
to plausibly identifying such components of the feature variable 
in the absence of label observations in the target dataset.

He et al.~\cite{he2022domain}  and Tasche~\cite{tasche2022factorizable} (Proposition~4)
noted that GLS implies FJS if the representation mapping is sufficient for the features,
i.e.\ under both the source and the target distribution the label distribution conditional on the feature 
variable equals the label distribution conditional on the mapped features. The following proposition shows
for a sufficient representation mapping that also FJS between the source and the target joint distributions
of the mapped features and the labels implies FJS between the original source and target distributions.
The function $R$ appearing in Proposition~\ref{pr:GLS} stands for the representation mapping. Typically,
$R$ would be some function applied to the feature variable $X$, i.e.\ $R = T(X)$ for some measurable function $T$.

\begin{proposition}\label{pr:GLS}
In the setting of Section~\ref{se:setting} and under Assumption~\ref{as:main} with $\frac{d Q}{d P}=f=f(X,Y)$, 
let $R$ be a measurable function on $(\Omega, \mathcal{F})$ such that
$\sigma(R) \subset \sigma(X)$ holds. Assume that $P|\sigma(R,Y)$ and $Q|\sigma(R,Y)$
are related through FJS in the sense of Definition~\ref{de:FJS} with 
$\frac{d Q|\sigma(R,Y)}{d P|\sigma(R,Y)} = \hbar(R)\,\gbar(Y)$ for measurable
non-negative functions $\hbar$ and $\gbar$. Suppose furthermore that it holds that
\begin{equation}\label{eq:sufficiency}
\begin{split}
P[Y\in G\,|\,\sigma(X)] & = P[Y\in G\,|\,\sigma(R)], \ \text{and} \\
Q[Y\in G\,|\,\sigma(X)] & = Q[Y\in G\,|\,\sigma(R)]
\end{split}
\end{equation}
for all $G \in \mathcal{G}$ (i.e.\ $R$ is a sufficient dimension reduction for $X$ with respect to $Y$ under both
$P$ and $Q$ in the sense of Definition~1.1 of Adragni and Cook~\cite{adragni2009sufficient}). 
In addition, assume that there exists a regular
conditional distribution $P_{Y|R}$	 of $Y$ with respect to $R$ under $P$ 
in the sense of Definition~\ref{de:regCondDist}. 
Then also $P$ and $Q$ are related through
FJS and it holds that
\begin{equation}\label{eq:alsoGLS}
f(X,Y) \ = \ \frac{E_P[f(X,Y)\,|\,\sigma(X)]}{E_P[\gbar(Y)\,|\,\sigma(X)]}\, \gbar(Y).
\end{equation}
\end{proposition}
See Appendix~\ref{se:proofs} for a proof of Proposition~\ref{pr:GLS}.
Recall that if $P|\sigma(R,Y)$ and $Q|\sigma(R,Y)$ are related through label shift, then according to Definition~3.1 of
Tachet des Combes et al.~\cite{tachetdescombes2020domain} $P$ and $Q$ are related through 
\emph{generalized label shift (GLS)}.

\begin{corollary}\label{co:GLS}
Under the assumptions of Proposition~\ref{pr:GLS}, suppose that 
$P|\sigma(R,Y)$ and $Q|\sigma(R,Y)$ are related through
label shift in the sense of Definition~\ref{de:covariatelabelShift} (ii). Denote by $g = \frac{d Q_Y}{d P_Y}$
the label density of $Q_Y$ with respect to $P_Y$. Then $P$ and $Q$ are related through
FJS and it holds that 
\begin{equation}\label{eq:coGLS}
f(X,Y)\ = \ \frac{E_P[f(X,Y)\,|\,\sigma(X)]}{E_P[g(Y)\,|\,\sigma(X)]}\, g(Y).
\end{equation}
\end{corollary}

\begin{proof}[Proof of Corollary~\ref{co:GLS}.]
Corollary~\ref{co:GLS} is a special case of Proposition~\ref{pr:GLS} since 
Proposition~\ref{pr:SpecCases} and its proof imply that 
$\frac{d Q|\sigma(R,Y)}{d P|\sigma(R,Y)} = g(Y)$. Replacing $\gbar$ in 
\eqref{eq:alsoGLS} with $g$ yields \eqref{eq:coGLS}.
\end{proof}

Corollary~\ref{co:GLS} suggests an alternative to domain adaptation through representation learning in combination 
with a GLS assumption. One of the objectives of representation learning is avoiding losing predictive power 
for classification or regression in comparison to classification or regression based on the original features.
For this reason, the sufficiency condition \eqref{eq:sufficiency} of Proposition~\ref{pr:GLS} is 
likely to be satisfied at least approximately.

But then by Corollary~\ref{co:GLS}, if $Q$ and $P$ are related through GLS they also are related through
FJS, and even through constrained FJS as discussed in Remark~\ref{rm:achieved} above.
This observation applies independently of the representation mapping $R$. Hence it might make sense to renounce finding
a representation $R$ for achieving GLS and instead resort to the EM algorithm for solving Problem~\ref{prB} as presented in
Section~\ref{se:probB}. 

\section{Concluding remarks}
\label{se:conclusions}

In this paper, we have extended the notion of factori\-zable joint shift (FJS) to cover the regression 
case with general label space and shed more light on the relation between FJS, covariate shift and label shift.
We also have revisited the property of generalized label shift (GLS) to imply FJS if the involved representation
mapping is sufficient for the original features. We have shown that this property is preserved in 
settings with general label spaces.

With regard to fitting FJS to given target feature and label marginal distributions, in Section~\ref{se:probA} 
we have presented an approach based on a non-linear integral equation for which we have proposed an iterative solution. 
Regarding
fitting FJS to a given target feature distribution without recourse to any label observations, 
in Section~\ref{se:probB} we have
presented a generalisation to label distribution estimation of the expectation maximisation (EM) approach for 
estimating target prior class probabilities under label shift
(Saerens et al.~\cite{saerens2002adjusting}) and shown that it is fit for purpose.

Future research topics could be the computational and convergence properties of the estimation approaches proposed 
in Sections~\ref{se:probA} and \ref{se:probB} as well as
the study of other specific types of distribution shift
(e.g.~sparse joint shift, Chen et al.~\cite{chen&zaharia&Zou:SJS}) in the face of general label spaces.

\section*{Acknowledgments}
The author thanks an anonymous reviewer for suggestions
that helped to significantly improve an earlier version of this article.

\appendix

\section{Notation and technical terms}
\label{se:notation}

Assume the setting of Section~\ref{se:setting} above.

\indent\textbf{Expected values.} For $\mathcal{F}$-Borel-measurable random variables 
$T: \Omega \to \mathbb{R}, \omega \mapsto T(\omega)$,  the \emph{expected value of $T$ with respect to $P$} is
\begin{equation*}
E_P[ T ] \ = \ \int_\Omega T(\omega)\,P(d \omega) \ = \ \int T\,d P
\end{equation*}
if $\int \lvert T\rvert \,dP < \infty$ or $T$ is non-negative. In the same way, expected values are defined
with respect to other probability spaces like $(\Omega_X, \mathcal{H}, P_X)$, $(\Omega, \mathcal{F}, Q)$ etc.

For example, for $\mathcal{H}$-Borel-measurable  $T: \Omega_X \to \mathbb{R}, x \mapsto T(x)$,  
the expected value of $T$ with respect to $P_X$ is
\begin{equation*}
E_{P_X}[ T ] \ = \ \int_{\Omega_X} T(x)\,P_X(d x) \ = \ \int T\,d P_X \ =\ E_P[T(X)]
\end{equation*}
if $\int \lvert T\rvert \,dP_X < \infty$ or $T$ is non-negative. 

\textbf{Indicator function.} For a set $M$, the \emph{indicator function} $\mathbf{1}_M$ is defined as 
$\mathbf{1}_M (m) = 1$ if $m \in M$ and $\mathbf{1}_M (m) = 0$ if $m \notin M$.

\textbf{`Almost surely'.} We use the shorthand `$\PP$-a.s.' for `$\PP$-almost surely' which is a
shorter version of the phrase `with probability $1$ under the distribution~$\PP$'. Any of the  
probability measures encountered in the following can be substituted for $\PP$.

The following notation for conditional expected values, probabilities and distributions is intended to
pin down a rigorous understanding of these concepts because sometimes they are only superficially or not at all
explained in the machine learning literature 
(e.g.~ Hastie et al.~\cite{hastie2009elements}, Murphy~\cite{murphy2012machine}, Wu~\cite{Wu2026trustworthy}).

\textbf{Expected values conditional on $\sigma$-algebras.} 
For $\mathcal{F}$-Borel-measurable $T: \Omega \to \mathbb{R},
\omega \mapsto T(\omega)$ and any sub-$\sigma$-algebra $\mathcal{C} \subset\mathcal{F}$, the \emph{expected value of $T$
conditional on $\mathcal{C}$ under $P$} is the $\mathcal{C}$-Borel-measurable 
random variable $E_P[T\,|\,\mathcal{C}]$ on $(\Omega, \mathcal{F}, P)$ which is determined $P$-a.s.\ by the property
\begin{equation*}
E_P\bigl[E_P[T\,|\,\mathcal{C}]\,\mathbf{1}_C\bigr]\ = \ E_P[T\,\mathbf{1}_C], \quad \text{for all}\ C \in \mathcal{C},
\end{equation*}
if $\int \lvert T\rvert \,dP < \infty$ or $T$ is non-negative.

For $F \in \mathcal{F}$, the  
\emph{probability of $F$ conditional on $\mathcal{C}$ under $P$} is defined by 
$P[F\,|\,\mathcal{C}]  =  E_P[\mathbf{1}_F\,|\,\mathcal{C}]$.
Expected values and probabilities conditional on $\sigma$-algebras under other 
probability measures like the target distribution
$Q$ are defined analogously.

\textbf{Conditional expected values of densities.} 
We also make use of the notation $P|\mathcal{G}$. While the probability measure
$P$ on the measurable space $(\Omega, \mathcal{F})$ is a mapping $P: \mathcal{F} \to [0,1]$,
$F \mapsto P[F]$, the term $P|\mathcal{G}$ with $\mathcal{G} \subset \mathcal{F}$ stands for the restricted mapping 
$P: \mathcal{G} \to [0,1]$. This notation is mainly of interest in the context of the following
useful fact.
\begin{lemma}\label{le:density}
Let $(\Omega, \mathcal{F})$ be a measurable space. Assume that $P$ and $Q$ are probability measures
on $(\Omega, \mathcal{F})$ and $\mathcal{G}\subset \mathcal{F}$ is a sub-$\sigma$-algebra of $\mathcal{F}$. 
If $f = \frac{d Q}{d P}$ is a density of $Q$ with respect to $P$ on $\mathcal{F}$ then 
$g = E_P[f\,|\,\mathcal{G}]$ is a density of $Q$ with respect to $P$ on $\mathcal{G}$, i.e.\
it holds that $g = \frac{d Q|\mathcal{G}}{d P|\mathcal{G}}$.
\end{lemma}

\textbf{Expected values conditional on random variables.} Let $T: \Omega \to \mathbb{R}, \omega \mapsto T(\omega)$
be $\mathcal{F}$-Borel-measurable and non-negative or satisfy $\int \lvert T\rvert \,dP < \infty$ and 
let $V$ be a measurable function on
$(\Omega, \mathcal{F})$ with values in a measurable space $(\Omega_V, \mathcal{V})$. Then the 
$\mathcal{V}$-Borel-measurable real-valued function $E_P[T\,|V = \cdot]$ is the 
\emph{expected value of $T$ conditional on $V$ under $P$} if it holds for all $M \in \mathcal{V}$ that
\begin{equation}\label{eq:condV}
E_P[T\,\mathbf{1}_{\{V \in M\}}]\ =\ \int_M E_P[T\,|V = v] P_V(dv),
\end{equation}
with $\{V \in M\} = V^{-1}(M)$ and the shorthand notation $E_P[T\,|V = v] = E_P[T\,|V = \cdot](v)$ 
for $v \in \Omega_V$. Note that $E_P[T\,|V = \cdot]$ is determined $P_V$-a.s.\ by property~\eqref{eq:condV}.

For $F \in \mathcal{F}$, the  
\emph{probability of $F$ conditional on $V$ under $P$} is defined by 
$P[F\,|\,V = \cdot]  =  E_P[\mathbf{1}_F\,|\,V = \cdot]$.
Expected values and probabilities conditional on random variables under other 
probability measures like the target distribution $Q$ are defined analogously.

The definitions of expected values conditional on $\sigma$-algebras and on random variables respectively 
are related through the following identity:
\begin{equation}\label{eq:identity}
E_P[T\,|\,\sigma(V)](\omega) \ =\ E_P[T\,|V = V(\omega)] \quad P\text{-a.s.\ for}\ \omega \in \Omega,
\end{equation}
where $\sigma(V) = V^{-1}(\mathcal{V})$ is the sub-$\sigma$-algebra of $\mathcal{F}$ generated by $V$.

Expected values and probabilities conditional on $\sigma$-algebras or on random variables are well-defined for
all combinations of probability spaces and integrable or non-negative random variables whose conditional expected values 
are meant to be determined
(Bauer~\cite{Bauer1981ProbTheory}, Sections~10.1 and 10.2; Klenke~\cite{klenke2013probability}, Sections~8.2 and 8.3). 
This is not necessarily true for conditional distributions as in the following definition.

\begin{definition}[Regular conditional distribution]\label{de:regCondDist}
In the setting of Section~\ref{se:setting}, a mapping $K: \Omega_X \times \mathcal{G} \to [0,\infty)$ is
called (regular) \emph{conditional distribution of $Y$ given $X$ under $P$} if it has the following three
properties:
\begin{enumerate}
\item[(i)] For fixed $G \in \mathcal{G}$, the function $K(\cdot, G):\Omega_X \to [0,\infty), x \mapsto K(x,G)$ is
	$\mathcal{H}$-Borel-measurable.
\item[(ii)] For fixed $x \in \Omega_X$, the function $K(x, \cdot):\mathcal{G} \to [0,\infty), G \mapsto K(x,G)$ is a
	probability measure on $(\Omega_Y, \mathcal{G})$.
\item[(iii)] For all $G \in \mathcal{G}$, it holds that $P[Y\in G\,|\,X=x] = K(x, G)$
	$P_X$-a.s.~for  $x \in \Omega_X$.
\end{enumerate}
In the following, we use the notation $P_{Y|X}$ for mappings $K$ satisfying (i) to (iii) and
write $K(x, G) = P_{Y|X=x}[G]$ to make it clear that the conditional probability from (iii) is
represented with a conditional distribution.
\end{definition}

Regular conditional distributions $P_{X|Y}$ also are defined by (i) to (iii), with swapped roles of
$X$ and $Y$. Conditional distributions under the target distribution $Q$ are defined by (i) to (iii),
with $Q$ substituted for $P$.

In general, the existence of regular conditional distributions can be guaranteed only if certain conditions
on the image of the conditioned variable -- $(\Omega_Y, \mathcal{G})$ in Definition~\ref{de:regCondDist} -- are satisfied 
(Bauer~\cite{Bauer1981ProbTheory}, Section~10.3; Klenke~\cite{klenke2013probability}, Section~8.3). 
If however the label variable $Y$ is real-valued, the regular conditional distribution $P_{Y|X}$
exists, without any restrictions on the feature variable $X$  (Klenke~\cite{klenke2013probability}, Theorem~8.28).

In Section~\ref{se:source} below, we introduce Assumption~\ref{as:aux} that suffices to guarantee the
existence of both $P_{Y|X}$ and $P_{X|Y}$ and to some extent can be checked empirically.
The need to have regular conditional distributions is context-dependent. In the following, it is
mentioned explicitly when regular conditional distributions $P_{X|Y}$ or $P_{Y|X}$ in
the sense of Definition~\ref{de:regCondDist} are supposed to exist.

\begin{subequations}
Regular conditional distributions are required in particular if integration with respect to 
$P_{X|Y}$ or $P_{Y|X}$ is involved, as 
for Fubini's theorem for conditional distributions (Klenke~\cite{klenke2013probability}, Theorem~14.26):
\begin{equation}\label{eq:Fubini}
\begin{split}
E_P[T(X,Y)] & \ = \ \int \left(\int T(x,y) P_{X|Y=y}(dx)\right) P_Y(dy) \\
& \ = \	\int \left(\int T(x,y) P_{Y|X=x}(dy)\right) P_X(dx), 
\end{split}
\end{equation}
for all $\mathcal{F}$-Borel measurable $P$-integrable or non-negative functions $T=T(X,Y): \Omega \to \mathbb{R}$
if the
related regular conditional distributions $P_{X|Y}$ and $P_{Y|X}$ respectively exist.

As a consequence of \eqref{eq:Fubini} expected values conditional on the features $X$ or 
on the labels $Y$ are related to conditional distributions as follows
\begin{equation}\label{eq:condExp}
\begin{split}
E_P[T(X,Y)\,|\,Y=y] & \ = \ \int T(x,y) P_{X|Y=y}(dx), \ P_Y\text{-a.s.\ for all}\ y \in \Omega_Y,\\
E_P[T(X,Y)\,|\,X=x] & \ = \ \int T(x,y) P_{Y|X=x}(dy),\ P_X\text{-a.s.\ for all}\ x \in \Omega_X,
\end{split}
\end{equation}
under the same conditions as for \eqref{eq:Fubini}.
\end{subequations}

\section{Specific assumptions for faciliting calculations}
\label{se:facilitating}

This appendix provides assumptions that refine Assumpion~\ref{as:main} above and allow
for numerical evaluations of the formulae presented in the main part of the paper. 
The additional assumptions also facilitate reconciliation with existing work like Saerens et
al.~\cite{saerens2002adjusting} and Tasche~\cite{tasche2022factorizable}.

\subsection{Assumptions on the source distribution and conditional densities}
\label{se:source}

With additional assumptions on the properties of the source distribution $P$, the density $f$ according 
to Assumption~\ref{as:main} can be decomposed into `building blocks' which allow for interpretations 
in terms of causality or chronological order and facilitate the demarcation of specific 
types of distribution shifts.

\begin{assumption}[Regular conditional distributions]\label{as:condDist}
The following two assumptions relating to the source distribution $P$ 
are to be understood in the setting of Section~\ref{se:setting}: 
\begin{itemize}
\item[(i)] There exists a conditional distribution $P_{Y|X}$ of $Y$ with respect to $X$ in the sense of
Definition~\ref{de:regCondDist}.
\item[(ii)] There exists a conditional distribution $P_{X|Y}$ of $X$ with respect to $Y$ in the sense of
Definition~\ref{de:regCondDist}.
\end{itemize}
\end{assumption}

Thanks to \eqref{eq:condExp} Assumption~\ref{as:condDist} immediately implies the following
version of Proposition~\ref{pr:genProperties}.
\begin{corollary}\label{co:genProperties}
In the setting of Section~\ref{se:setting}, let $T = T(X,Y): \Omega \to \mathbb{R}$ 
be $\mathcal{F}$-Borel-measurable with $E_P[|T|] < \infty$ or
$T \ge 0$. 
\begin{itemize}
\item[(i)] Under Assumptions~\ref{as:main} and \ref{as:condDist}~(i), 
the density $h$ of $Q_X$  with respect to $P_X$ can be represented as
\begin{align*}
h(x) & \ = \ \int f(x,y)\,P_{Y|X=x}(dy), \quad P_X\text{-a.s.\ for}\ x \in \Omega_X,\\
\intertext{and it holds that}
E_Q[T\,|\,X=x] & \ = \ \frac{\int T(x,y)\,f(x,y)\,P_{Y|X=x}(dy)}{h(x)}, 
\end{align*}
for $x\in \Omega_X$ with $h(x) > 0$, i.e.\ with probability~$1$ under $Q_X$.
\item[(ii)]  Under Assumptions~\ref{as:main} and \ref{as:condDist}~(ii), the density $g$ of $Q_Y$ 
with respect to $P_Y$ can be represented as
\begin{align*}
g(y) & \ = \ \int f(x,y)\,P_{X|Y=y}(dx), \quad P_Y\text{-a.s.\ for}\ y \in \Omega_Y,\\
\intertext{and it holds that}
E_Q[T\,|\,Y=y] & \ = \ \frac{\int T(x,y)\,f(x,y)\,P_{X|Y=y}(dx)}{g(y)},
\end{align*}
for $y\in \Omega_Y$ with $g(y) > 0$, i.e.\ with probability~$1$ under $Q_Y$.
\end{itemize} 
\end{corollary}

Assumption~\ref{as:condDist} is also useful because of the following result that clarifies the concept of
conditional density as considered in the distribution shift context of this paper.
\begin{theorem}\label{th:condDens}\ 
\begin{itemize}
\item[(i)] Under Assumptions~\ref{as:main} and \ref{as:condDist}~(i), let $h$ be a density of $Q_X$ with
respect to $P_X$ and define
$q_{Y|X}: \Omega_X \times \Omega_Y \to [0, \infty)$ by
$q_{Y|X=x}(y) = \begin{cases}
\frac{f(x,y)}{h(x)}, & h(x) > 0\\
0, & h(x) = 0
\end{cases}$ for $x\in \Omega_X$ and $y \in \Omega_Y$.
Then for $x\in \Omega$ with $h(x) > 0$ (and therefore $Q_X$-a.s.) $q_{Y|X=x}$ is a density of 
$Q_{Y|X=x}$ with respect to $P_{Y|X=x}$.
\item[(ii)] Under Assumptions~\ref{as:main} and \ref{as:condDist}~(ii), let $g$ be a density of $Q_Y$ with
respect to $P_Y$  and define
$q_{X|Y}: \Omega_X \times \Omega_Y \to [0, \infty)$ by
$q_{X|Y=y}(x) = \begin{cases}
\frac{f(x,y)}{g(y)}, & g(y) > 0\\
0, & g(y) = 0
\end{cases}$ for $x\in \Omega_X$ and $y \in \Omega_Y$.
Then for $y\in \Omega$ with $g(y) > 0$ (and therefore $Q_Y$-a.s.) $q_{X|Y=y}$ is a density of 
$Q_{X|Y=y}$ with respect to $P_{X|Y=y}$.
\end{itemize}
\end{theorem}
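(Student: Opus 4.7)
Both statements are symmetric, so I would prove (i) in detail and obtain (ii) by swapping the roles of $X$ and $Y$ and invoking Assumption~\ref{as:condDist}(ii) in place of (i). The strategy is to show that the mapping $K(x,G) = \int_G q_{Y|X=x}(y)\,P_{Y|X=x}(dy)$ (extended arbitrarily on the $P_X$-null set $\{h=0\}$) satisfies the three defining properties of a regular conditional distribution of $Y$ given $X$ under $Q$ from Definition~\ref{de:regCondDist}. By uniqueness of conditional expectation, $K(x,\cdot)$ then agrees $Q_X$-a.s.\ with $Q_{Y|X=x}$, which is precisely the claim that $q_{Y|X=x}$ is a density of $Q_{Y|X=x}$ with respect to $P_{Y|X=x}$.

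Measurability of $K(\cdot,G)$ and the probability-measure property of $K(x,\cdot)$ are routine: the former comes from measurability of section integrals against the Borel-measurable function $f$, and the latter reduces, on $\{h>0\}$, to $\int q_{Y|X=x}\,dP_{Y|X=x} = h(x)/h(x) = 1$ by Corollary~\ref{co:genProperties}(i). The heart of the proof is identity~(iii) of Definition~\ref{de:regCondDist}. Fix $H\in\mathcal{H}$ and $G\in\mathcal{G}$. Using \eqref{eq:transform} and Fubini's theorem \eqref{eq:Fubini} (available by Assumption~\ref{as:condDist}(i)),
\begin{equation*}
Q[X\in H,\,Y\in G] \ =\ E_P[f\,\mathbf{1}_H(X)\,\mathbf{1}_G(Y)] \ =\ \int_H \int_G f(x,y)\,P_{Y|X=x}(dy)\,P_X(dx).
\end{equation*}
On $\{h>0\}$ the inner integral equals $h(x)\,K(x,G)$ by definition of $q_{Y|X=x}$; on $\{h=0\}$, Corollary~\ref{co:genProperties}(i) gives $\int f(x,y)\,P_{Y|X=x}(dy)=0$, so the integrand vanishes and the set contributes nothing. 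Since $Q_X$ has density $h$ with respect to $P_X$, the right-hand side equals $\int_H K(x,G)\,Q_X(dx)$, which is exactly the defining relation identifying $K(x,G)$ with $Q_{Y|X=x}[G]$ for $Q_X$-almost all $x$.

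The main obstacle is that the $Q_X$-null set produced by the previous step depends on $G$, whereas the theorem asks for a single $Q_X$-a.s.\ set on which $K(x,\cdot)$ and $Q_{Y|X=x}$ coincide as measures on all of $\mathcal{G}$. This is the familiar nuisance when upgrading a Radon-Nikodym identity from individual sets to whole measures in the presence of regular conditional distributions. I would resolve it by a monotone class / $\pi$-$\lambda$ argument: both $G\mapsto K(x,G)$ and $G\mapsto Q_{Y|X=x}[G]$ are probability measures on $\mathcal{G}$, and they agree on a countable generating $\pi$-system of $\mathcal{G}$ off a single $Q_X$-null set (which implicitly uses countable generation of $\mathcal{G}$, a property satisfied by all label spaces of interest in the paper), hence on all of $\mathcal{G}$ off that same set. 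Everything else is bookkeeping.
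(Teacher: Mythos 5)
Your construction and Fubini computation are essentially the paper's own proof: the paper likewise defines $K(x,G)=\int_G q_{Y|X=x}(y)\,P_{Y|X=x}(dy)$ on $\{h>0\}$ and verifies via \eqref{eq:transform} and \eqref{eq:Fubini} that $Q[X\in H, Y\in G]=\int_H K(x,G)\,Q_X(dx)$ for all $H\in\mathcal{H}$, $G\in\mathcal{G}$. Where you diverge is the closing step, and the ``main obstacle'' you describe is not actually there. Property (iii) of Definition~\ref{de:regCondDist} only requires, for each fixed $G$, equality $Q_X$-a.s.\ with a null set that may depend on $G$; moreover the theorem does not presuppose that a regular conditional distribution of $Y$ given $X$ under $Q$ exists at all (Assumption~\ref{as:condDist} concerns $P$ only). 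The paper therefore simply declares the constructed $K$ to \emph{be} $Q_{Y|X}$ --- the construction supplies existence --- and the density claim holds for every $x$ with $h(x)>0$ directly from the definition of $K$, with no $\pi$-$\lambda$ argument and, importantly, no countable generation of $\mathcal{G}$, a hypothesis the paper never imposes in its general-label-space setting. Your detour buys a statement valid for an arbitrary pre-chosen version of $Q_{Y|X}$, but at the price of assuming both that such a version exists and that $\mathcal{G}$ is countably generated. Two small slips to fix: $\{h=0\}$ is a $Q_X$-null set, not a $P_X$-null set (since $h=\frac{dQ_X}{dP_X}$, it may well have positive $P_X$-mass), and the extension of $K$ on that set cannot be literally arbitrary --- it must be a measurable family of probability measures (the paper takes the point mass at a fixed $y_0$) so that properties (i) and (ii) of Definition~\ref{de:regCondDist} are preserved.
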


See Appendix~\ref{se:proofs} for a proof of Theorem~\ref{th:condDens}.

In this paper, the densities $q_{X|Y}$ and $q_{Y|X}$ as defined in Theorem~\ref{th:condDens} both are 
called \emph{conditional density}. 
Based on Thorem~\ref{th:condDens}, versions for general label spaces can be formulated of 
the `normal form' of the joint density $f$  (Tasche~\cite{tasche2022factorizable}, Theorem~1),
of the feature density $h$ (Tasche~\cite{tasche2022factorizable}, Corollary~1) and
of the `posterior correction formula'\footnote{%
Called `Bayes update rule' in Donyavi et al.~\cite{DonyaviMatch2025}.} 
(Tasche~\cite{tasche2022factorizable}, Corollary~2).

\begin{samepage}
\begin{subequations}
\begin{corollary}\label{co:main}\
\begin{itemize}
\item[(i)] Under Assumption~\ref{as:main} and Assumption~\ref{as:condDist}~(ii),
the density $f$  of the joint target
distribution $Q$ of $X$ and $Y$ with respect to the joint source distribution $P$ can be represented
in terms of the conditional density $q_{X|Y}$ as defined in Thorem~\ref{th:condDens} and the label density $g$  as 
\begin{equation}\label{eq:genDens}
f(x,y) \ = \ q_{X|Y=y}(x)\,g(y), \quad \text{for}\ (x,y) \in \Omega.
\end{equation}
\item[(ii)] Under Assumption~\ref{as:main} and Assumptions~\ref{as:condDist}~(i) and (ii),
the feature density $h$ of $Q_X$ with respect to $P_X$ can be written 
in terms of the conditional density $q_{X|Y}$, the label density $g$
and the source conditional distribution $P_{Y|X}$  as
\begin{equation}\label{eq:hDens}
h(x) \ = \ \int_{\Omega_Y} q_{X|Y=y}(x)\,g(y)\,P_{Y|X=x}(dy), \quad \text{for}\ x \in \Omega_X.
\end{equation}
\item[(iii)] Under Assumption~\ref{as:main} and Assumptions~\ref{as:condDist}~(i) and (ii),
the posterior probabilities of $Y$ given $X$ under $Q$ can
be described in terms of the conditional density $q_{X|Y}$, the label density $g$
and the source conditional distribution $P_{Y|X}$ 
for all $x \in \Omega_X$ with $h(x) > 0$ (i.e.\ $Q_x$-a.s.) by
\begin{equation}\label{eq:correction}
Q_{Y|X=x}[G] \ = \ \frac{\int_G q_{X|Y=y}(x)\,g(y)\,P_{Y|X=x}(dy)}{h(x)}, \quad\text{for}\ G \in \mathcal{G}.
\end{equation}
\end{itemize}
\end{corollary}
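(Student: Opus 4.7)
The plan is to derive all three parts of Corollary~\ref{co:main} by chaining together the representations already established in Theorem~\ref{th:condDens} and Corollary~\ref{co:genProperties}, with part~(i) doing the real work and parts~(ii) and (iii) following by straightforward substitution.

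For part~(i), I would start from Theorem~\ref{th:condDens}~(ii), which under Assumptions~\ref{as:main} and \ref{as:condDist}~(ii) defines $q_{X|Y=y}(x) = f(x,y)/g(y)$ whenever $g(y) > 0$ and $0$ otherwise. On the set $\{g > 0\}$, the identity $f(x,y) = q_{X|Y=y}(x)\,g(y)$ then holds by construction. The point requiring care is the set $\{g = 0\}$, where the product $q_{X|Y=y}(x)\,g(y)$ is identically zero while $f(x,y)$ is a priori only non-negative. Here I would invoke the representation $g(y) = \int f(x,y)\,P_{X|Y=y}(dx)$ from Corollary~\ref{co:genProperties}~(ii): on $\{g = 0\}$ and for $P_Y$-almost every such $y$, the non-negative function $x \mapsto f(x,y)$ has vanishing $P_{X|Y=y}$-integral and hence equals $0$ for $P_{X|Y=y}$-a.e.\ $x$. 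Fubini's theorem~\eqref{eq:Fubini} applied to $\mathbf{1}_{\{g=0\}}(y)\,f(x,y)$ then yields $f = 0$ on $\{g = 0\}$ up to a $P$-null set, so the product representation extends $P$-a.s.\ to all of $\Omega$, which is the sense in which~\eqref{eq:genDens} should be read.

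For part~(ii), Corollary~\ref{co:genProperties}~(i) supplies under the added Assumption~\ref{as:condDist}~(i) the formula $h(x) = \int f(x,y)\,P_{Y|X=x}(dy)$, valid $P_X$-a.s. Substituting the representation $f(x,y) = q_{X|Y=y}(x)\,g(y)$ from part~(i) immediately gives~\eqref{eq:hDens}; the substitution is legitimate because integration against $P_{Y|X=x}(dy)$ followed by integration against $P_X(dx)$ reconstructs integration against $P$, so a $P$-null exceptional set for~\eqref{eq:genDens} translates into a $P_X$-null exceptional set for~\eqref{eq:hDens}.

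For part~(iii), I would again apply Corollary~\ref{co:genProperties}~(i) but now with $T(X,Y) = \mathbf{1}_G(Y)$ for fixed $G \in \mathcal{G}$, obtaining $Q_{Y|X=x}[G] = E_Q[\mathbf{1}_G(Y)\,|\,X=x] = \frac{\int_G f(x,y)\,P_{Y|X=x}(dy)}{h(x)}$ on $\{h > 0\}$. Substituting~\eqref{eq:genDens} into the numerator yields~\eqref{eq:correction}. The main obstacle across the three parts is really only the null-set bookkeeping in part~(i); once $f = q_{X|Y}\,g$ is established $P$-a.s., parts~(ii) and~(iii) are mechanical consequences of results already proved earlier in the section.
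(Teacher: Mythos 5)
Your proposal is correct and follows essentially the same route as the paper: part~(i) is read off from the definition of $q_{X|Y}$ in Theorem~\ref{th:condDens}~(ii), and parts~(ii) and~(iii) follow by substituting this into Corollary~\ref{co:genProperties}~(i). Your additional argument that $f$ vanishes $P$-a.s.\ on $\{g(Y)=0\}$ is careful null-set bookkeeping the paper leaves implicit, and it correctly pins down the $P$-a.s.\ sense in which \eqref{eq:genDens} holds.
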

\end{subequations}
\end{samepage}

\begin{proof}[Proof of Corollary~\ref{co:main}.]
(i) is basically the definition of $q_{X|Y=y}(x)$ in Theorem \ref{th:condDens} (ii). (ii) is the combination
of (i) and Corollary~\ref{co:genProperties}~(i). (iii) is implied also by (i) and Corollary~\ref{co:genProperties}~(i).
\end{proof}

\subsection{Facilitating calculations by further assumptions}
\label{se:further}

The setting of Tasche \cite{tasche2022factorizable} differs from the setting in Section~\ref{se:setting}
through the additional assumption of the label variable $Y$ being categorical, i.e.\ when
the context is classification. This  is stated formally
in the following assumption where without loss of generality the categories are supposed to be natural numbers.
\begin{assumption}[Categorical label space]\label{as:discrete}
In the setting of Section~\ref{se:setting}, the label space $\Omega_Y$ is finite
with $\Omega_Y = \{1, \ldots, d\}$, $d \ge 2$. The $\sigma$-algebra
$\mathcal{G}$ is the power set of $\Omega_Y$, i.e.\ $\mathcal{G}=\mathfrak{P}(\Omega_Y)$.
\end{assumption}

Under Assumption~\ref{as:discrete},  suppose in addition $0 < P[Y=i]$, 
$i = 1, \ldots, d$. Then thanks to the fact that $\Omega_Y$ is finite, Assumptions~\ref{as:condDist}~(i)
and (ii) both hold true. 

Under distribution shift as expressed through Assumption~\ref{as:main}, 
to reconcile Corollary~\ref{co:main} with the corresponding statements of
Tasche~\cite{tasche2022factorizable} under Assumption~\ref{as:discrete}, 
observe that $q_{X|Y=i} = h_i$ for the conditional densities 
defined in  Lemma~1 of Tasche~\cite{tasche2022factorizable} and $g(i) = \frac{Q[Y=i]}{P[Y=i]}$ for $i=1, \ldots, d$. 
Then it follows from Corollary~\ref{co:main} (iii) with $G = \{i\}$ that
\begin{equation}\label{eq:discrete.correction}
Q[Y=i\,|\,X=x] \ =\ \frac{h_i(x)\,\frac{Q[Y=i]}{P[Y=i]}\,P[Y=i\,|\,X=x]}
{\sum_{j=1}^d h_j(x)\,\frac{Q[Y=j]}{P[Y=j]}\,P[Y=j\,|\,X=x]}, \quad i=1, \ldots, d.
\end{equation}
\eqref{eq:discrete.correction} restates Corollary~2 of Tasche~\cite{tasche2022factorizable} 
in the notation of this paper. Similarly, Theorem~1 and
Corollary~1 of Tasche~\cite{tasche2022factorizable}  are implied by Corollary~\ref{co:main}~(i) and (ii) 
respectively.

In the case where the label variable $Y$ is not categorical, 
even if Assumption~\ref{as:condDist}~(i) or Assumption~\ref{as:condDist}~(ii) can be shown to be
satisfied thanks to certain properties of the feature or label spaces (like in the case where $Y$ is
real-valued, cf.~the comments after Definition~\ref{de:regCondDist}), 
it is not yet clear how $P_{Y|X}$ and $P_{X|Y}$ can be determined.
For this reason the subsequent Assumption~\ref{as:aux} is useful 
in so far as it implies both Assumption~\ref{as:condDist}~(i) 
and Assumption~\ref{as:condDist}~(ii) as well as convenient representations of $P_{Y|X}$ and $P_{X|Y}$
in Propostion~\ref{pr:condDist} below.

Define in the setting of Section~\ref{se:setting} the probability measure $P_X \otimes P_Y$ on 
$(\Omega, \mathcal{F})$ as the product measure of $P_X$ and $P_Y$.
Then $P_X \otimes P_Y$ is uniquely determined as the distribution on $(\Omega, \mathcal{F})$
such that $X$ and $Y$ are independent, i.e.~by the property
\begin{equation*}
P_X \otimes P_Y[H \times G] = P_X \otimes P_Y[X \in H, Y \in G] = P_X[H]\,P_Y[G], \quad H \in \mathcal{H},
	G \in \mathcal{G}.
\end{equation*}
In general, it holds that $P_X \otimes P_Y \neq P$ but the marginal distributions
of $X$ and $Y$ under $P_X \otimes P_Y$ and $P$ are equal.

\begin{assumption}[Absolute continuity of source distribution]\label{as:aux}
In the setting of Section~\ref{se:setting}, the source distribution $P$ is absolutely continuous 
on $(\Omega, \mathcal{F})$ with
respect to the product measure $P_X \otimes P_Y$ of the feature source distribution $P_X$ and
the label source distribution $P_Y$. Denote the density of $P$ 
with respect to $P_X \otimes P_Y$ by $\varphi$, i.e.~it holds that\footnote{%
$\varphi$ serves as a representation of the dependence structure of $X$ and $Y$ under $P$ in a 
similar way as a copula expresses the dependence structure of a random vector in a Euclidean space 
(cf.~Durante and Sempi~\cite{durante2016principles}, Theorem~2.1.1 and Theorem~2.2.1 (Sklar's theorem)).} 
\begin{equation}\label{eq:phi}
\frac{d P}{d P_X \otimes P_Y}\ =\ \varphi.
\end{equation}
\end{assumption}

Observe that Assumption~\ref{as:discrete} implies Assumption~\ref{as:aux} if $0 < P[Y=i]$, $i = 1, \ldots, d$, 
with 
\begin{equation}\label{eq:discreteDensity}
\varphi(x,i) = \frac{P[Y=i\,|\,X=x]}{P[Y=i]}, \quad x \in \Omega_X, i \in \Omega_Y = \{1, \ldots, d\}.
\end{equation}

The choice of $P_X \otimes P_Y$ as reference measure in Assumption~\ref{as:aux} is convenient 
because no additional assumptions
on the structural properties of the instance space $(\Omega, \mathcal{F})$ are required, in contrast e.g.~to
the case where the Lebesgue measure serves as reference measure.
Nonetheless, having a density of $P$ with respect to $P_X \otimes P_Y$ does not exclude representations of $P$
with other densities for computational purposes. For instance, if $P_X$ and $P_Y$ have both Lebesgue densities,
then by the density chain rule $P$ also has a Lebesgue density.

Intuitively, Assumption~\ref{as:aux} is true if the supports of $P$ and $P_X \otimes P_Y$ have the same dimension.
As Filipovi{\'c} and Schneider~\cite{filipovic2025kernel} point out in Section~5.2.2, 
Assumption~\ref{as:aux} is violated in particular if $X$ is a function of $Y$ or vice versa.
Filipovi{\'c} and Schneider~\cite{filipovic2025kernel} (Lemma~5.3) 
provide a sufficient criterion for Assumption~\ref{as:aux} to be true.
Moreover, there are feasible methods for finding the density
of $P$ with respect to $P_X \otimes P_Y$.
Filipovi{\'c} and Schneider~\cite{filipovic2025kernel} (Section~5.2) propose to use a 
reproducing kernel Hilbert space (RKHS) method for estimating $\varphi$. 
 They also comment on other methods for doing so from the previous literature on the
subject. In addition, $\varphi$ can be determined by means of a binary probabilistic classifier, see 
Appendix~\ref{se:DensEst} below. An advantage of the classification approach is that in principle 
the spaces $(\Omega_X, \mathcal{H})$ and $(\Omega_Y, \mathcal{G})$ need not satisfy any specific conditions
for the approach to work.

\begin{proposition}\label{pr:condDist}
Assumption~\ref{as:aux} implies both Assumption~\ref{as:condDist}~(i) and Assumption~\ref{as:condDist}~(ii), with
\begin{equation*}
\begin{split}
P_{Y|X=x}[G] & = \int_G \varphi(x,y)\,P_Y(dy),\quad \text{for}\ x \in \Omega_X, G \in \mathcal{G},\quad \text{and}\\
P_{X|Y=y}[H] & = \int_H \varphi(x,y)\,P_X(dx), \quad \text{for}\ y \in \Omega_Y, H \in \mathcal{H}.
\end{split}
\end{equation*}
\end{proposition}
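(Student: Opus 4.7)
My plan is to construct the candidate kernel explicitly and verify the three properties of Definition~\ref{de:regCondDist}. For Assumption~\ref{as:condDist}~(i), set
\[
K(x,G) \;=\; \int_G \varphi(x,y)\,P_Y(dy), \qquad x\in\Omega_X,\ G\in\mathcal{G}.
\]
Since $\varphi$ is $\mathcal{F} = \mathcal{H}\otimes\mathcal{G}$-Borel-measurable and non-negative, Tonelli's theorem applied to the product measure $P_X \otimes P_Y$ (not to a conditional distribution, so no circularity) immediately gives property~(i) of Definition~\ref{de:regCondDist}: the map $x\mapsto K(x,G)$ is $\mathcal{H}$-Borel-measurable for every fixed $G$. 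For property~(ii), monotone convergence shows that $G\mapsto K(x,G)$ is a measure on $(\Omega_Y,\mathcal{G})$ for every $x$; it remains to check that its total mass equals $1$.

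The total-mass identity follows from
\[
1 \;=\; P[\Omega] \;=\; \int \varphi\,d(P_X\otimes P_Y) \;=\; \int_{\Omega_X}\!\left(\int_{\Omega_Y}\varphi(x,y)\,P_Y(dy)\right)P_X(dx),
\]
so $\int \varphi(x,y)\,P_Y(dy) = 1$ for $P_X$-a.s.\ $x$. On the $P_X$-null exceptional set I redefine $\varphi(x,\cdot)\equiv 1$ (equivalently, set $K(x,\cdot)$ to an arbitrary fixed probability measure); this alteration is harmless because it does not change $\varphi$ as a density of $P$ with respect to $P_X\otimes P_Y$. After this modification, $K(x,\cdot)$ is a probability measure on $(\Omega_Y,\mathcal{G})$ for every $x\in\Omega_X$, establishing property~(ii).

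For property~(iii), I need to check that $K(x,G) = P[Y\in G\,|\,X=x]$ $P_X$-a.s., and by the characterisation \eqref{eq:condV} it suffices to verify, for every $H\in\mathcal{H}$ and $G\in\mathcal{G}$, that
\[
P[X\in H,\,Y\in G] \;=\; \int_H K(x,G)\,P_X(dx).
\]
By Assumption~\ref{as:aux} and Tonelli's theorem for the product measure,
\[
P[X\in H,\,Y\in G] \;=\; \int_{H\times G}\varphi\,d(P_X\otimes P_Y) \;=\; \int_H \left(\int_G\varphi(x,y)\,P_Y(dy)\right)P_X(dx),
\]
which is exactly $\int_H K(x,G)\,P_X(dx)$. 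Hence $K = P_{Y|X}$ is a regular conditional distribution, proving Assumption~\ref{as:condDist}~(i) with the stated formula.

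Assumption~\ref{as:condDist}~(ii) is obtained by swapping the roles of $X$ and $Y$: the setting of Section~\ref{se:setting} is symmetric in the two variables, and $\varphi$ is also a density of $P$ with respect to $P_Y\otimes P_X$, so the identical argument yields $P_{X|Y=y}[H] = \int_H \varphi(x,y)\,P_X(dx)$. The only genuine technicality is ensuring that the kernel is a probability measure for \emph{every} $x$ (resp.\ $y$) and not merely almost surely, which is settled by the null-set modification described above; the remaining calculations are straightforward applications of Tonelli's theorem to the product measure.
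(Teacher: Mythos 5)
Your proof is correct in substance, but it takes a genuinely different route from the paper: you verify the three properties of Definition~\ref{de:regCondDist} directly for the kernel $K(x,G)=\int_G\varphi(x,y)\,P_Y(dy)$, using Tonelli's theorem for the genuine product measure $P_X\otimes P_Y$, whereas the paper obtains Proposition~\ref{pr:condDist} as a special case of Theorem~\ref{th:condDens} by substituting $P$ for $Q$, $P_X\otimes P_Y$ for $P$ and $\varphi$ for $f$, and then noting that the marginal densities in that theorem reduce to the constant $1$. Your direct argument is more elementary and self-contained (it never needs the conditional-Fubini machinery of \eqref{eq:Fubini}, only product-measure Tonelli), while the paper's route buys economy and makes visible that the proposition is structurally the same statement as Theorem~\ref{th:condDens} with the independent coupling as reference measure. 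One step in your write-up needs repair, though it is easily fixed from material you already have: the inference ``$1=\int_{\Omega_X}\bigl(\int_{\Omega_Y}\varphi(x,y)\,P_Y(dy)\bigr)P_X(dx)$, so $\int\varphi(x,y)\,P_Y(dy)=1$ for $P_X$-a.s.\ $x$'' is a non sequitur: a non-negative function with integral $1$ need not equal $1$ a.s. The correct justification is to use your property-(iii) computation with $G=\Omega_Y$ and arbitrary $H\in\mathcal{H}$, which gives $P_X[H]=\int_H\bigl(\int_{\Omega_Y}\varphi(x,y)\,P_Y(dy)\bigr)P_X(dx)$ for all $H$, i.e.\ the inner integral is a density of $P_X$ with respect to itself and hence equals $1$ $P_X$-a.s.; after that, your null-set modification (or, as in the paper's proof of Theorem~\ref{th:condDens}, replacing $K(x,\cdot)$ by a fixed probability measure on the exceptional set) legitimately upgrades ``probability measure for $P_X$-a.e.\ $x$'' to ``for every $x$'', and the symmetric argument for $P_{X|Y}$ goes through unchanged.
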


\begin{proof}[\textbf{Proof of Proposition~\ref{pr:condDist}.}]
Actually, Proposition~\ref{pr:condDist} is a special case of Theorem~\ref{th:condDens}. For instance, to prove
the statement regarding Assumption~\ref{as:condDist}~(i), substitute $P$ for $Q$, $P_X \otimes P_Y$ 
for $P$ and $\varphi$ for $f$ in Theorem~\ref{th:condDens}. The theorem is applicable because under 
$P_X \otimes P_Y$ the marginal distributions of $X$ and $Y$ are regular conditional distributions of $X$ given $Y$
and $Y$ given $X$ respectively. Moreover, since the marginal distributions of $X$ and $Y$ are the same under
$P$ and $P_X \otimes P_Y$, it follows that
\begin{equation}
\begin{split}
\frac{d P_X}{d (P_X \otimes P_Y)_X} & = \frac{d P_X}{d P_X} = 1\quad\text{and}\\
\frac{d P_Y}{d (P_X \otimes P_Y)_Y} & = \frac{d P_Y}{d P_Y} = 1,
\end{split}
\end{equation}
and hence in the notation of Theorem~\ref{th:condDens} 
\begin{equation*}
g(y) = \int \varphi(x,y)\,P_X(dx) = 1 = 
\int \varphi(x,y)\,P_Y(dy) = h(x)
\end{equation*}
$P_Y$-a.s.\ for $y \in \Omega_Y$ and $P_X$-a.s.\ for $x \in \Omega_X$.
\end{proof}

Thanks to Proposition~\ref{pr:condDist}, under Assumption~\ref{as:aux} 
Fubini's theorem for conditional distributions \eqref{eq:Fubini} and
the formulae \eqref{eq:condExp} for conditional expected values under $P$ obtain simpler shapes. For instance,
the expected value of a $\mathcal{F}$-Borel measurable random variable 
$T: \Omega \to \mathbb{R}, (x,y) \mapsto T(x,y)$  with
$E_P[\lvert T(X,Y)\rvert] < \infty$ or $T \ge 0$ conditional on $X =x$ can be represented as
\begin{equation}\label{eq:consequence}
E_P[T(X,Y)\,|\,X=x] = \int T(x,y)\,\varphi(x,y) P_Y(dy),\quad P_X\text{-a.s.~for}\ x \in \Omega_X.
\end{equation}
\begin{subequations}
The same comment applies to the formulae from Corollary~\ref{co:genProperties}. The definitions of 
the conditional densities which are introduced in Theorem~\ref{th:condDens} remain unchanged under
Assumption~\ref{as:aux} such that for example there is no change in \eqref{eq:genDens} either. However, 
under Assumption~\ref{as:aux} the density $h$ of $Q_X$ with respect to $P_X$ as represented in 
\eqref{eq:hDens} may be calculated via 
\begin{equation}\label{eq:hDens2}
h(x) \ = \ \int q_{X|Y=y}(x)\,g(y)\,\varphi(x,y)\,P_Y(dy), \quad x \in \Omega_X.
\end{equation}
Under Assumption~\ref{as:aux}, moreover the posterior correction formula \eqref{eq:correction} 
can be rewritten (for $x \in \Omega_X$ with $h(x) >0$) as
\begin{equation}
Q[Y\in G\,|\,X=x] \ = \ \frac{\int_G q_{X|Y=y}(x)\,g(y)\,\varphi(x,y)\,P_Y(dy)}{h(x)},
\quad \ G \in \mathcal{G},
\end{equation}
with $h(x)$ as in \eqref{eq:hDens2}.
\end{subequations}

\section{Density estimation as binary classification problem}
\label{se:DensEst}

It is a well-known fact that a Radon-Nikodym density of a probability measure with respect to another
probability measure can be represented by means of the posterior probability in a binary classification
problem (Qin~\cite{Qin1998DensClass}; Sugiyama et al.~\cite{sugiyama2012density}, Chapter~4). 
With the following proposition, we provide a formal statement and proof of this fact.

\begin{proposition}\label{pr:DensEst}
Let $(\Mbar, \mathcal{M})$ be a measurable space and $\PP_0$ and $\PP_1$ be probability measures on 
$(\Mbar, \mathcal{M})$. Assume that $\lambda \ge 0$ is a density of $\PP_1$ with respect to $\PP_0$, i.e.\ 
it holds that $\lambda = \frac{d \PP_1}{d \PP_0}$.\\
Define $\Omegabar = \{0, 1\} \times \Mbar \times \Mbar$ and the $\sigma$-algebra $\Fbar$ on $\Omegabar$ by 
$\Fbar = \mathfrak{P}(\{0, 1\}) \otimes \mathcal{M} \otimes \mathcal{M}$ (the product $\sigma$-algebra of 
$\mathfrak{P}(\{0, 1\}) = \bigl\{\emptyset, \{0\}, \{1\}, \{0, 1\}\bigr\}$,
$\mathcal{M}$ and $\mathcal{M}$).

For $\omega = (s, m_0, m_1) \in \Omegabar$ denote by $S$, $X_0$ and $X_1$ the coordinate projections, i.e.\
$S(\omega) = s$, $X_0(\omega) = m_0$ and $X_1(\omega) = m_1$. \\
Let $0 < p < 1$ be fixed and define the probability measure $\PP_p$ on $(\Omegabar, \Fbar)$ as the product measure
such that $S$, $X_0$ and $X_1$ are independent under $\PP_p$ with $\PP_p[S=1] = p = 1 - \PP_p[S=0]$, 
$\PP_p[X_0 \in M] = \PP_0[M]$ and $\PP_p[X_1 \in M] = \PP_1[M]$ for $M \in \mathcal{M}$.\\
In addition, define $X: \Omegabar \to \Mbar$ by $X = \begin{cases} 
X_0, & \text{if}\ S=0\\
X_1, & \text{if}\ S=1
\end{cases}$\\
and $\eta_p(m) = \PP_p[S=1\,|\,X=m]$. Then it holds that
\begin{equation}\label{eq:posterior}
\eta_p(m) \ = \ \frac{p\,\lambda(m)}{p\,\lambda(m) + 1-p}, \quad \PP_0\text{-a.s.}\ m \in \Mbar.
\end{equation}
\end{proposition}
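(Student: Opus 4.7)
The plan is to compute the joint distribution of $(S, X)$ under $\PP_p$, extract the marginal distribution of $X$, and then verify that the right-hand side of \eqref{eq:posterior} satisfies the defining property \eqref{eq:condV} of a conditional probability.

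First, I would compute $\PP_p[S=1, X\in M]$ for arbitrary $M \in \mathcal{M}$. By the construction of $X$ together with the independence of $S$, $X_0$ and $X_1$ under $\PP_p$, the event $\{S=1, X \in M\}$ coincides with $\{S=1, X_1 \in M\}$, hence
\begin{equation*}
\PP_p[S=1, X\in M]\ =\ p\,\PP_1[M]\ =\ p \int_M \lambda\,d\PP_0,
\end{equation*}
where the last equality uses $\lambda = d\PP_1/d\PP_0$. The analogous argument gives $\PP_p[S=0, X\in M] = (1-p)\PP_0[M]$, so summing yields the marginal law of $X$:
\begin{equation*}
\PP_p[X\in M]\ =\ \int_M \bigl(p\,\lambda + (1-p)\bigr)\,d\PP_0, \quad M \in \mathcal{M}.
\end{equation*}
In particular, $\PP_X$ has density $p\lambda + (1-p)$ with respect to $\PP_0$.

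Second, I would verify that the function $\tilde\eta_p(m) = \frac{p\,\lambda(m)}{p\,\lambda(m) + 1-p}$, which is well-defined everywhere since $p\lambda(m) + (1-p) \ge 1-p > 0$, satisfies the characterising property \eqref{eq:condV} for $\PP_p[S=1\,|\,X = \cdot]$. Using the marginal density computed above,
\begin{equation*}
\int_M \tilde\eta_p(m)\,\PP_X(dm)\ =\ \int_M \frac{p\,\lambda(m)}{p\,\lambda(m) + 1-p}\bigl(p\,\lambda(m) + 1-p\bigr)\,\PP_0(dm)\ =\ p \int_M \lambda\,d\PP_0,
\end{equation*}
which matches $\PP_p[S=1, X\in M]$ from the first step. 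By the $\PP_X$-a.s.\ uniqueness of conditional expectations given $X$, this identifies $\eta_p = \tilde\eta_p$ up to a $\PP_X$-null set, and since $\PP_X$ and $\PP_0$ are mutually absolutely continuous on $\{p\lambda + (1-p) > 0\} = \Mbar$ in the direction needed (any $\PP_0$-null set is $\PP_X$-null because the density is bounded), the equality holds $\PP_0$-a.s.

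I do not expect a genuine obstacle here: the computation is a measure-theoretic Bayes formula and the denominator never vanishes. The only point requiring mild care is the passage from $\PP_X$-a.s.\ equality to $\PP_0$-a.s.\ equality as stated in \eqref{eq:posterior}, which follows from the fact that the $\PP_0$-density $p\lambda + (1-p)$ of $\PP_X$ is strictly positive, so $\PP_X$-null and $\PP_0$-null sets coincide.
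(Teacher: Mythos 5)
Correct, and essentially the paper's own argument run in the opposite direction: both proofs rest on the identity $\PP_p[S=1, X\in M] = p\int_M \lambda\,d\PP_0$ and on the fact that the law of $X$ under $\PP_p$ has $\PP_0$-density $p\,\lambda + 1-p \ge 1-p > 0$; the paper derives the pointwise relation $\eta_p\,(p\,\lambda+1-p)=p\,\lambda$ $\PP_0$-a.s.\ for the a priori existing $\eta_p$ and divides, whereas you verify the closed-form candidate against the defining property \eqref{eq:condV} and then pass from $\PP_X$-a.s.\ to $\PP_0$-a.s.\ equality. One small slip: the mid-proof parenthetical names the wrong direction of absolute continuity (what is needed is that $\PP_X$-null sets are $\PP_0$-null, i.e.\ $\PP_0\ll\PP_X$), but your closing sentence supplies the correct justification via the lower bound $1-p$ on the density.
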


\begin{proof}[Proof of Proposition~\ref{pr:DensEst}.]
On the one hand, by the definition of conditional probability, it follows for $M \in \mathcal{M}$ that
\begin{align}
\PP_p[S=1, X \in M] & = E_{\PP_p}\bigl[\eta_p(X)\,\mathbf{1}_{\{X \in M\}}\bigr]\notag\\
& = E_{\PP_p}\bigl[\eta_p(X_0)\,\mathbf{1}_{\{X_0 \in M, S=0\}}\bigr] +
	E_{\PP_p}\bigl[\eta_p(X_1)\,\mathbf{1}_{\{X_0 \in M, S=1\}}\bigr]\notag\\
& = (1-p)\, E_{\PP_0}[\eta_p\,\mathbf{1}_M] + p\,E_{\PP_1}[\eta_p\,\mathbf{1}_M]\notag \\
& = E_{\PP_0}\bigl[\mathbf{1}_M\,\eta_p\,(p\,\lambda + 1-p) \bigr].\label{eq:second}
\end{align}
On the other hand, the definition of $\PP_p$ implies 
\begin{equation*}
\PP_p[S=1, X \in M] \ = \ \PP_p[S=1, X_1\in M]\ = \ p\,\PP_1[M] \ = \ p\,E_{\PP_0}[\lambda\,\mathbf{1}_M].
\end{equation*}
Together with \eqref{eq:second}, this observation implies $\eta_p(m)\,(p\,\lambda(m) + 1-p) = p\,\lambda(m)$ 
$\PP_0$-almost surely for $m \in \Mbar$. Since $p\,\lambda(m) + 1-p \ge 1-p > 0$,  
\eqref{eq:posterior} follows.
\end{proof}

Some consequences of Proposition~\ref{pr:DensEst} are as follows:
\begin{itemize}
\item \eqref{eq:posterior} implies $\eta_p(m) < 1$ for $m \in \Mbar$ $\PP_0$-a.s.\ and therefore also
\begin{equation}\label{eq:binaryDensity}
\lambda(m) \ = \ \frac{1-p}{p}\,\frac{\eta_p(m)}{1-\eta_p(m)}.
\end{equation}
\item \eqref{eq:binaryDensity}, in particular, implies that the parameter $p$ is eliminated when
\eqref{eq:posterior} is solved for $\lambda(m)$.
\item Suppose that there are two separate samples of realisations of a variable 
(with an arbitrary but fixed range of values),
and that the two samples have been generated under different distributions of the variable in question.
Assume further that one of these distributions is absolutely continuous with respect to the other distribution such
that there exists a density of the former distribution with respect to the latter distribution.
Proposition~\ref{pr:DensEst} can be used to estimate the density in the following way:
\begin{enumerate}
\item[1)] Combine the two samples and assign the label $0$ to the sample elements which come from the reference  
distribution for the density and the label $1$ to the sample elements which stem from the distribution 
which is absolutely continuous with respect to the other density.
\item[2)] Train a binary probabilistic binary classifier $\eta$ to classify the elements of the combined sample
according to their labels.
\item[3)] Let $p = \frac{\text{\# labels 1}}{\text{size of combined sample}}$.
\item[4)] Use \eqref{eq:binaryDensity} to represent the density in terms of the learnt probabilistic classifier $\eta$
and $p$ from step 3).
\end{enumerate}
\end{itemize}
\eqref{eq:binaryDensity} shows that over-sampling and under-sampling techniques may be deployed when training
the classifier without changing the resulting density.

Consider the case where the density $\varphi$ of $P=P_{(X,Y)}$ with respect to $P_X \otimes P_Y$ according to
Assumption~\ref{as:aux} is estimated by means of binary classification as described above.
Then one sample from
$P$ suffices because a sample for $P_X \otimes P_Y$ can be generated as the set of pairs of each element of the 
marginal sample for $X$ and each element of the marginal sample for $Y$. But this method creates strong 
imbalance of the
labels in the classification sample. Therefore, the robustness of the classification approach with respect to under- 
and over-sampling becomes important.

\section{Proofs}
\label{se:proofs}

\begin{proof}[Proof of Proposition~\ref{pr:SpecCases}.]
Denote by $f$ the density of $Q$ with respect to $P$ that exists by Assumption~\ref{as:main}.
Define the $\mathcal{H}$-measurable function $h \ge 0$ by $h(x) = E_P[f\,|\,X=x]$ for $x \in \Omega_X$.
Then for $H \in \mathcal{H}$ and $G \in \mathcal{G}$, under covariate shift it follows that
\begin{align*}
Q[X\in H, Y \in G] & = E_Q[\mathbf{1}_{\{X\in H\}}\,\eta_G(X)]\\
	& = E_P[f\,\mathbf{1}_{\{X\in H\}}\,\eta_G(X)]\\
	& = E_P\bigl[E_P[f\,|\,\sigma(X)]\,\mathbf{1}_{\{X\in H\}}\,P[Y\in G\,|\,\sigma(X)\bigr]\\
	& = E_P[h(X)\,\mathbf{1}_{\{X\in H, Y\in G\}}].
\end{align*}
By \eqref{eq:F}, $\bigl\{\{X\in H, Y \in G\}: H \in \mathcal{H}, G \in \mathcal{G}\bigr\}$ is
a $\cap$-stable generator of $\mathcal{F}$, hence by the measure uniqueness theorem  
(Bauer~\cite{Bauer1981ProbTheory}, Theorem~1.5.5) $h(X)$ is
a density of $Q$ with respect to $P$ on $(\Omega, \mathcal{F})$. Thus, $P$ and $Q$ are related through 
FJS with $\hbar = h$ and $\gbar = 1$. Thanks to the symmetry of the setting with respect to 
$X$ and $Y$, we also have shown that label shift implies FJS with $\hbar = 1$ and $\gbar = E_P[f\,|\,Y=\cdot]$.\\
The converse claims follow from Lemma~4.2 of Tasche~\cite{tasche2023sparse}.
\end{proof}

\begin{proof}[Proof of Proposition~\ref{pr:combined}.]\
\begin{itemize}
\item[(i)]  $E_P\bigl[\gbar(Y)\bigr] =0$ would
imply $\gbar(Y) = 0$ $P$-a.s.~and hence also $E_P\bigl[\hbar(X)\,\gbar(Y)\bigr] = 0$. This would contradict
$E_P\bigl[\hbar(X)\,\gbar(Y)\bigr] = 1$ which is true by definition of FJS. Thus $E_P\bigl[\gbar(Y)\bigr] >0$ is true.
From the definition of $Q_L$ it follows that 
\begin{equation*}
E_{Q_L}\bigl[\hbar(X)\bigr]\ =\ \frac{E_P\bigl[\hbar(X)\,\gbar(Y)\bigr]}{E_P\bigl[\gbar(Y)\bigr]} \ = \
\frac{1}{E_P\bigl[\gbar(Y)\bigr]} \ > \ 0.
\end{equation*}
Hence also $Q_C$ is well-defined.
\item[(ii)] For all $F\in \mathcal{F}$ it follows from the definitions of $Q_C$ and $Q_L$ that
\begin{equation*}
Q_C[F] \ = \ \frac{E_{Q_L}\bigl[\hbar(X)\,\mathbf{1}_F\bigr]}{E_{Q_L}\bigl[\hbar(X)\bigr]}\ = \
\,E_P\bigl[\hbar(X)\,\gbar(Y)\,\mathbf{1}_F\bigr]\frac{E_P\bigl[\gbar(Y)\bigr]}{E_P\bigl[\gbar(Y)\bigr]}\ =\ Q[F].
\end{equation*}
\item[(iii)] Both equations in (iii) follow from Proposition~\ref{pr:SpecCases}. 
\end{itemize}
\end{proof}

\begin{proof}[Proof of Theorem~\ref{th:alternateB}.]\ 
\begin{description}
\item[ad (i)] $0 < g_n(Y) < \infty$ implies $0 < E_P[g_n(Y)\,|\,\sigma(X)] = h_n(X) < \infty$. \\
$E_P[h_n(X)] = 1$ follows from $1 = E_P[g_n(Y)] = E_P\bigl[E_P[g_n(Y)\,|\,\sigma(X)]\bigr]$.\\
By \eqref{eq:gnB}, $0 < g_{n+1}(Y) < \infty$ follows from the corresponding properties of
$h(X)$, $g_n(Y)$ and $h_n(X)$. 
$g_{n+1}(Y)$ is a density because 
\begin{equation*}
E_P[g_{n+1}(Y)] = E_P\left[\frac{h(X)}{h_n(X)}\,E_P[g_n(Y)\,|\,\sigma(X)]\right]
= E_P[h(X)] =1.
\end{equation*}

\item[ad (ii)] $E_P[f_n] = E_P\left[\frac{h(X)}{h_n(X)}\,g_n(Y)\right] = 
E_P\left[\frac{h(X)}{h_n(X)}\,E_P[g_n(Y)\,|\,\sigma(X)]\right]$\\ 
$= E_P[h(X)] =1$.

\item[ad (iii)] $\frac{d Q_Y^{(n)}}{d P_Y}(y) = E_P[g_n(Y)\,|\,Y=y] = g_n(y)$,\\ 
$\frac{d Q_X^{(n)}}{d P_X}(x) = E_P[g_n(Y)\,|\,X=x] = h_n(x)$,\\ 
$\frac{d R_X^{(n)}}{d P_X}(x) = E_P\left[\frac{h(X)}{h_n(X)}\,g_n(Y)\,\Big|\,X=x\right]
= \frac{h(x)}{h_n(x)}\,E_P[g_n(Y)\,|\,X=x] = h(x)$,\\ 
$\frac{d R_Y^{(n)}}{d P_Y}(y) = E_P\left[\frac{h(X)}{h_n(X)}\,g_n(Y)\,\Big|\,Y=y\right] =
g_n(y)\,E_P\left[\frac{h(X)}{h_n(X)}\,\Big|\,Y=y\right] = g_{n+1}(y)$.

\item[ad (iv)] Observe that 
\begin{equation*}
E_P[f_n\,\log(g_k(Y))] = E_P\bigl[E_P[f_n\,|\,\sigma(Y)]\,\log(g_k(Y))\bigr]
= E_P[g_{n+1}(Y)\,\log(g_k(Y))]
\end{equation*}
\begin{subequations}
for $k \in\{n, n+1\}$ by \eqref{eq:gnB}. Jensen's inequality implies
\begin{equation}\label{eq:Jens1}
E_P[g_{n+1}(Y)\,\log(g_{n+1}(Y))] \ \ge\ E_P[g_{n+1}(Y)\,\log(g_n(Y))].
\end{equation}
From the assumptions on the integrability of $\log(h_n(X))$ and $\log(g_k(Y))$ it follows
that $E_P[f_n\,\lvert\log(f_k)\rvert] < \infty$ for $k \in\{n, n+1\}$. Again Jensen's inequality implies
\begin{align}
 E_P[f_n\,\log(f_{n+1})] & \ \le \ E_P[f_n\,\log(f_n)]\notag\\
 \iff  E_P\left[f_n\,\log\left(\frac{g_{n+1}(Y)}{h_{n+1}(X)}\right)\right] &\ \le \ 
 		E_P\left[f_n\,\log\left(\frac{g_n(Y)}{h_n(X)}\right)\right]. \label{eq:Jens2}
\end{align}
Now (iv) follows from the calculation
\begin{align*}
E_P[h(X)\,\log(h_{n+1}(X))] & = E_P\bigl[E_P[f_n\,|\,\sigma(X)]\,\log(h_{n+1}(X))\bigr]\\
& = E_P[f_n\,\log(h_{n+1}(X))] \\
& = E_P[g_{n+1}(Y)\,\log(g_{n+1}(Y))] - E_P\left[f_n\,\log\left(\frac{g_{n+1}(Y)}{h_{n+1}(X)}\right)\right]\\
& \ge E_P[g_{n+1}(Y)\,\log(g_n(Y))] - E_P\left[f_n\,\log\left(\frac{g_n(Y)}{h_n(X)}\right)\right]\\
& = E_P[f_n\,\log(h_n(X))] \\
& = E_P[h(X)\,\log(h_n(X))],
\end{align*}
where the inequality is implied by \eqref{eq:Jens1} and \eqref{eq:Jens2}.
\end{subequations}
\end{description}
\end{proof}

\begin{proof}[Proof of Proposition~\ref{pr:convergence}.]
Since $\int g\, dP_Y =1 = 
\int g_n\, dP_Y$ for all $n$ and $g_n(Y) \to g(Y)$ in probability, the sequence $g_n(Y)$, $n = 0, 1, \ldots$, 
is uniformly integrable under $P$  and
it follows that $\lim_{n\to\infty} E_P\bigl[\lvert g_n(Y) - g(Y)\rvert \bigr]  = 0$, i.e.~(i) 
(Ethier and Kurtz~\cite{ethier1986markov}, Proposition~2.3 of Appendix, and Klenke~\cite{klenke2013probability},
Theorem~6.25).
This observation implies that  $h_n(X) = E_P[g_n(Y)\,|\,\sigma(X)]$, $n = 0, 1, \ldots$, converges in
mean to $\hhat(X) = E_P[g(Y)\,|\,\sigma(X)] > 0$ (Klenke~\cite{klenke2013probability}, Corollary~8.20). 
This proves (ii).
(iii) follows from (ii) and Corollary~\ref{co:KL} thanks to Fatou's lemma. \\
Thanks to $g_n \to g >0$, \eqref{eq:gnB} implies  $E_P\left[\frac{h(X)}{h_n(X)}\,\Big|\,\sigma(Y)\right] 
	\xrightarrow[n\to\infty]{} 1$ in probability under $P$.
By Theorem~20.5 of Billingsley~\cite{billingsley1986probability} and (ii) there is a sub-sequence 
$n(1) < n(2) < \ldots$ of $0, 1, 2, \ldots$ such
that both $\lim_{k\to\infty} E_P\left[\frac{h(X)}{h_{n(k)}(X)}\,\Big|\,\sigma(Y)\right]   = 1$ 
$P$-a.s.\  and
$\lim_{k\to\infty} h_{n(k)}(X) = \hhat(X)$ $P$-a.s.\ hold true.
Hence Fatou's lemma  for conditional expectations implies
\begin{align*}
1 & = \lim\limits_{k\to\infty} E_P\left[\frac{h(X)}{h_{n(k)}(X)}\,\Big|\,\sigma(Y)\right] \\
& \ge E_P\left[\lim\limits_{k\to\infty} \frac{h(X)}{h_{n(k)}(X)}\,\Big|\,\sigma(Y)\right]\\
& = E_P\left[\frac{h(X)}{\hhat(X)}\,\Big|\,\sigma(Y)\right]. 
\end{align*}
This completes the proof.
\end{proof}

\begin{proof}[Proof of Proposition~\ref{pr:alternate}.] The proof is carried out with induction over $n$.
For any $n$, $0 < \gbar_{n}(Y) < \infty$ $P$-a.s.\ implies $0 < E_P[\gbar_{n}(Y)\,|\,\sigma(X)] < \infty$ $P$-a.s.
By \eqref{eq:hn}, it follows that $0 < \hbar_n(X) < \infty$ $P$-a.s.
Again by \eqref{eq:hn} we obtain for all $H\in\mathcal{H}$
\begin{align*}
E_P[\mathbf{1}_H(X)\,h(X)] & \ = \ E_P\bigl[\mathbf{1}_H(X)\,\hbar_n(X)\,E_P[\gbar_n(Y)\,|\,\sigma(X)]\bigr] \\
& \ = \ E_P[\mathbf{1}_H(X)\,\hbar_n(X)\,\gbar_n(Y)].
\end{align*}
Since $h$ is a density under $P_X$, this proves that $\hbar_n(X)\,\gbar_n(Y)$ is a density under $P$ 
as well as $\frac{d Q_X^{(n)}}{d P_X} = h$.\\ 
Now $0 < \hbar_n(X) < \infty$ $P$-a.s.\ implies
$0 < E_P[\hbar_{n}(X)\,|\,\sigma(Y)] < \infty$ $P$-a.s.\ and 
$0 < \gbar_{n+1}(Y) < \infty$ $P$-a.s.\ by \eqref{eq:gn}. 
Eq.~\eqref{eq:gn} also implies for all $G\in \mathcal{G}$
\begin{align*}
E_P[\mathbf{1}_G(Y)\,g(Y)] &\ = \ E_P\bigl[\mathbf{1}_G(Y)\,\gbar_{n+1}(Y)\,E_P[\hbar_n(X)\,|\,\sigma(Y)]\bigr] \\
&\ = \ E_P[\mathbf{1}_G(Y)\,\hbar_n(X)\,\gbar_{n+1}n(Y)].
\end{align*}
Since $g$ is a density under $P_Y$, this proves that $\gbar_{n+1}(Y)\,\hbar_n(X)$ is a density under $P$ 
as well as $\frac{d R_Y^{(n)}}{d P_Y} = g$.
\end{proof}

\begin{proof}[Proof of Proposition~\ref{pr:GLS}.]
Observe that 
\begin{equation*}
Q[\hbar(R)\,\gbar(Y) > 0] = 1 = Q\bigl[\hbar(R)\,E_P[\gbar(Y)\,|\,\sigma(R)] > 0\bigr]
\end{equation*}
(proof like in Lemma~A1 of Tasche~\cite{tasche2022factorizable}). This implies 
\begin{subequations}
\begin{equation}\label{eq:positive}
Q[\hbar(R) >0]\ =\ 1\ = \ Q\bigl[E_P[\gbar(Y)\,|\,\sigma(R)] > 0\bigr].
\end{equation}
Thanks to the existence of $P_{Y|R}$, \eqref{eq:sufficiency} implies
$E_P[T(Y)\,|\,\sigma(R)] = E_P[T(Y)\,|\,\sigma(X)]$ for all $\mathcal{G}$-Borel measurable functions
$T \ge 0$. Therefore, it follows from \eqref{eq:positive} that
\begin{equation}\label{eq:denom}
0\ = \ Q\bigl[E_P[\gbar(Y)\,|\,\sigma(X)] = 0\bigr] \ = \ 
E_P\bigl[E_P[f(X,Y)\,|\,\sigma(X)]\,\mathbf{1}_{\{E_P[\gbar(Y)\,|\,\sigma(X)]=0\}}\bigr].
\end{equation}
\end{subequations}
\eqref{eq:positive} and \eqref{eq:denom} together with the sufficiency assumptions and the generalized Bayes formula
imply that the following calculation is valid for 
all $H \in \mathcal{H}$ and $G \in \mathcal{G}$:
\begin{align*}
Q[X\in H, Y \in G] & = Q\bigl[\mathbf{1}_{\{X\in H\}}\,Q[Y\in G\,|\,\sigma(X)]\bigr]\\
& = Q\bigl[\mathbf{1}_{\{X\in H\}}\,Q[Y\in G\,|\,\sigma(R)]\bigr]\\
& = E_Q\left[\mathbf{1}_{\{X\in H\}}\,\frac{E_P[\gbar(Y)\,\mathbf{1}_{\{Y\in G\}}\,|\,\sigma(R)]}
	{E_P[\gbar(Y)\,|\,\sigma(R)]} \right]\\
& = E_P\left[f(X,Y)\,\mathbf{1}_{\{X\in H\}}\,\frac{E_P[\gbar(Y)\,\mathbf{1}_{\{Y\in G\}}\,|\,\sigma(R)]}
	{E_P[\gbar(Y)\,|\,\sigma(R)]} \right]\\
& = E_P\left[E_P[f(X,Y)\,|\,\sigma(X)]\,\mathbf{1}_{\{X\in H\}}\,
	\frac{E_P[\gbar(Y)\,\mathbf{1}_{\{Y\in G\}}\,|\,\sigma(R)]}	{E_P[\gbar(Y)\,|\,\sigma(R)]} \right]\\
& = E_P\left[\mathbf{1}_{\{X\in H\}\cap \{Y\in G\}}\, 
	\frac{E_P[f(X,Y)\,|\,\sigma(X)]}{E_P[\gbar(Y)\,|\,\sigma(X)]}\,\gbar(Y)\right].
\end{align*}
This implies \eqref{eq:alsoGLS}.
\end{proof}

\begin{proof}[Proof of Theorem~\ref{th:condDens}.]
Observe that $(x,y) \mapsto q_{Y|X=x}(y)$ is $\mathcal{H}\otimes\mathcal{G}$-Borel
measurable. Define $K: \Omega_X \times \mathcal{G} \to [0,\infty)$ by 
\begin{equation*}
K(x, G)\  =\ \begin{cases}  
\int_G q_{Y|X=x}(y)\,P_{Y|X=x}(dy), & h(x) >0 \\
\mathbf{1}_{y_0}(G), & h(x)=0
\end{cases}
\end{equation*}
with a fixed arbitrary $y_0 \in \Omega_Y$,  for $x \in \Omega_X$ and $G\in\mathcal{G}$. 
Then $K$ satisfies (i) of Definition~\ref{de:regCondDist} and $K(x, \cdot)$ is a measure on $(\Omega_Y, \mathcal{G})$. 
By Fubini's theorem \eqref{eq:Fubini}, it follows for $H \in \mathcal{H}$ and $G \in \mathcal{G}$  that
\begin{align*}
Q[X \in H, Y \in G] & = \int_{H \cap \{h > 0\}} \left(\int_G q_{Y|X=x}(y)\,P_{Y|X=x}(dy)\right) h(x)\,P_X(dx)\\
& = \int_{H \cap \{h > 0\}} K(x,G)\,Q_X(dx)  \\
& = \int_H K(x,G)\,Q_X(dx).
\end{align*}
By \eqref{eq:condV}, this observation implies (iii) of Definition~\ref{de:regCondDist} 
for every $G \in \mathcal{G}$. That $K(x,\cdot)$ is a probability measure follows from (iii) of
Definition~\ref{de:regCondDist} for $G=\Omega_Y$. Hence $Q_{Y|X} = K$ is a regular conditional 
distribution of $Y$ with respect to $X$ under $Q$. The density property 
of $q_{Y|X=x}$ for $x$ with $h(x) >0$ then follows from the definition of $K$. This implies (i) of 
Theorem~\ref{th:condDens}. (ii) follows from (i) since the roles of $X$ and $Y$ may be swapped thanks
to the symmetry of the setting in Section~\ref{se:setting}.
\end{proof}


\providecommand{\href}[2]{#2}
\providecommand{\arxiv}[1]{\href{http://arxiv.org/abs/#1}{arXiv:#1}}
\providecommand{\url}[1]{\texttt{#1}}
\providecommand{\urlprefix}{URL }

\end{document}